\newcounter{parentalgorithm}
\newenvironment{subalgorithms}{%
  \refstepcounter{algorithm}%
  \protected@edef\theparentalgorithm{\thealgorithm}%
  \setcounter{parentalgorithm}{\value{algorithm}}%
  \setcounter{algorithm}{0}%
  \def\thealgorithm{\theparentalgorithm\alph{algorithm}}%
  \ignorespaces
}{%
  \setcounter{algorithm}{\value{parentalgorithm}}%
  \ignorespacesafterend
}
\newcommand{\bgamma}{\bm\gamma}
\newcommand{\blambda}{\bm\lambda}
\newcommand{\bphi}{\bm\phi}
\newcommand{\D}{\mathscr{D}}
\newcommand{\G}{\mathcal{G}}
\newcommand{\I}{\mathcal{I}}
\newcommand{\mtA}{\mathcal{A}}
\newcommand{\mtB}{\mathcal{B}}
\newcommand{\mtC}{\mathcal{C}}
\newcommand{\mtE}{\mathcal{E}}
\newcommand{\mtQ}{\mathcal{Q}}
\newcommand{\mtV}{\mathcal{V}}
\newcommand{\x}{\bm{x}}
\newcommand{\hx}{\hat{\bm{x}}}
\newcommand{\z}{\bm{z}}
\newcommand{\bv}{\bm{v}}
\newcommand{\pt}{p_{\G}}
\newcommand{\ptv}{p_{\G_\mathcal{V}}}
\newcommand{\indep}{\perp\!\!\!\perp}
\newcommand{\set}[1]{\left\{{#1}\right\}}
\newtheorem{lemma}{Lemma}
\algnewcommand{\LineComment}[1]{\State \(\triangleright\) #1}
\renewcommand{\paragraph}{%
  \@startsection{paragraph}{4}%
  {\z@}{0ex \@plus 1ex \@minus .2ex}{-1em}%
  {\normalfont\normalsize\bfseries}%
}
\newcommand{\red}[1]{\textcolor{BrickRed}{#1}}
\let\OldStatex\Statex
\renewcommand{\Statex}[1][3]{%
  \setlength\@tempdima{\algorithmicindent}%
  \OldStatex\hskip\dimexpr#1\@tempdima\relax}
\begin{document}

%

%


\runningtitle{Detecting Dependencies in Sparse, Multivariate Databases}
\runningauthor{Saad and Mansinghka}

\twocolumn[
  \aistatstitle{Detecting Dependencies in Sparse, Multivariate Databases
  Using Probabilistic Programming and Non-parametric Bayes}

  \aistatsauthor{%
    Feras Saad \And
    Vikash Mansinghka}

  \aistatsaddress{%
    Probabilistic Computing Project\\
    Massachusetts Institute of Technology \And
    Probabilistic Computing Project\\
    Massachusetts Institute of Technology}
]


\begin{abstract}
Datasets with hundreds of variables and many missing values are commonplace.
In this setting, it is both statistically and computationally challenging to
detect true predictive relationships between variables and also to suppress
false positives.
This paper proposes an approach that combines probabilistic programming,
information theory, and non-parametric Bayes.
It shows how to use Bayesian non-parametric modeling to (i) build an ensemble of
joint probability models for all the variables; (ii) efficiently detect marginal
independencies; and (iii) estimate the conditional mutual information between
arbitrary subsets of variables, subject to a broad class of constraints.
Users can access these capabilities using BayesDB, a probabilistic programming
platform for probabilistic data analysis, by writing queries in a simple, %
SQL-like language.
This paper demonstrates empirically that the method can (i) detect %
context-specific (in)dependencies on challenging synthetic problems and (ii)
yield improved sensitivity and specificity over baselines from statistics and
machine learning, on a real-world database of over 300 sparsely observed
indicators of macroeconomic development and public health.
\end{abstract}


\section{Introduction}
\label{sec:introduction}

Sparse databases with hundreds of variables are commonplace.
In these settings, it can be both statistically and computationally challenging
to detect predictive relationships between variables \cite{data2013}.
First, the data may be incomplete and require cleaning and imputation before
pairwise statistics can be calculated.
Second, parametric modeling assumptions that underlie standard hypothesis
testing techniques may not be appropriate due to nonlinear, multivariate, and/or
heteroskedastic relationships.
Third, as the number of variables grows, it becomes harder to detect true
relationships while suppressing false positives.
Many approaches have been proposed (see \cite[Table 1]{lopez2013} for a
summary), but they each exhibit limitations in practice.
For example, some only apply to fully-observed real-valued data, and most do not
produce probabilistically coherent measures of uncertainty.
This paper proposes an approach to dependence detection that combines
probabilistic programming, information theory, and non-parametric Bayes.
The end-to-end approach is summarized in Figure~\ref{fig:workflow}.
Queries about the conditional mutual information (CMI) between variables of
interest are expressed using the Bayesian Query Language \cite{mansinghka2015},
an SQL-like probabilistic programming language.
Approximate inference with CrossCat \cite{mansinghka2016} produces an ensemble
of joint probability models, which are analyzed for structural (in)dependencies.
For model structures in which dependence cannot be ruled out, the CMI is
estimated via Monte Carlo integration.

In principle, this approach has significant advantages.
First, the method is scalable to high-dimensional data: it can be used for
exploratory analysis without requiring expensive CMI estimation for all pairs of
variables.
Second, it applies to heterogeneously typed, incomplete datasets with minimal
\mbox{pre-processing} \cite{mansinghka2016}.
Third, the non-parametric Bayesian joint density estimator used to form CMI
estimates can model a broad class of data patterns, without overfitting to
highly irregular data.
This paper shows that the proposed approach is effective on a
\mbox{real-world} database with hundreds of variables and a missing data
rate of $\sim$35\%, detecting common-sense predictive relationships that are
missed by baseline methods while suppressing spurious relationships that
baselines purport to detect.



\begin{figure*}[t]

\begin{tikzpicture}


\node[
    rectangle,
    draw = none,
    align = center
] (data-table-header) {\footnotesize
    \textbf{Sparse Tabular Database}
};

\node[
  rectangle,
  draw = none,
  align = center,
  below = 0cm of data-table-header
] (data-table){
  \scriptsize
  \begin{tabular}{|l|l|l|l|}
  \hline
  \textbf{X} & \textbf{Y} & \textbf{W} & \textbf{Z} \\
  \hline
  19        & Congo         & 170       & 1.4    \\
  14        &               & 182       &        \\
  21        & India         &           & 3.4    \\
  17        & Lebanon       & 195       &        \\
            & Chile         & 115       & 1.1    \\
            & Australia     &           & 2.9    \\
  31        &               & 190       & 2.3    \\
  \dots     & \dots         & \dots     & \dots  \\
  \hline
  \end{tabular}
};


\node[
    rectangle,
    draw = none,
    align = center,
    right = 1.2 cm of data-table-header
] (bayesdb-modeling) {\footnotesize
    \textbf{BayesDB Modeling}
};

\draw [-stealth,thick] (data-table-header) -- (bayesdb-modeling);


\node[
    rectangle,
    draw = none,
    align = center,
    right = 1.2cm of bayesdb-modeling
] (crosscat-header) {\footnotesize\bf
    Posterior CrossCat Structures
};

\draw [-stealth,thick] (bayesdb-modeling) -- (crosscat-header);

\node[
  rectangle,
  draw = none,
  align = center,
  right = 4.5cm of data-table.south,
  anchor = south
] (crosscat-structure-1){
  \tiny

  \begin{tabular}{|>{\columncolor{gray!40}}c|}
  \hline
  \multicolumn{1}{|c|}{\textbf{X}}\\
  \hline
  \\
  \\
  \arrayrulecolor{red}\hline
  \\
  \\
  \\
  \arrayrulecolor{black}\hline
  \end{tabular}

  \begin{tabular}{|>{\columncolor{gray!40}}c|>{\columncolor{gray!40}}c|}
  \hline
  \multicolumn{1}{|c|}{\textbf{Y}} & \multicolumn{1}{c|}{\textbf{W}}\\
  \hline
  & \\
  \arrayrulecolor{red}\hline
  & \\
  & \\
  \arrayrulecolor{red}\hline
  & \\
  & \\
  \arrayrulecolor{black}\hline
  \end{tabular}

  \begin{tabular}{|>{\columncolor{gray!40}}c|}
  \hline
  \multicolumn{1}{|c|}{\textbf{Z}}\\
  \hline
  \\
  \\
  \\
  \\
  \arrayrulecolor{red}\hline
  \\
  \arrayrulecolor{black}\hline
  \end{tabular}
};

\node[
  rectangle,
  draw = none,
  align = center,
  right = .4cm of crosscat-structure-1
] (crosscat-structure-2){
  \tiny

  \begin{tabular}{|>{\columncolor{gray!40}}c|>{\columncolor{gray!40}}c|}
  \hline
  \multicolumn{1}{|c|}{\textbf{X}} & \multicolumn{1}{c|}{\textbf{Y}}\\
  \hline
  & \\
  & \\
  & \\
  \arrayrulecolor{red}\hline
  & \\
  & \\
  \arrayrulecolor{black}\hline
  \end{tabular}

  \begin{tabular}{|>{\columncolor{gray!40}}c|}
  \hline
  \multicolumn{1}{|c|}{\textbf{W}}\\
  \hline
  \\
  \arrayrulecolor{red}\hline
  \\
  \\
  \\
  \\
  \arrayrulecolor{black}\hline
  \end{tabular}

  \begin{tabular}{|>{\columncolor{gray!40}}c|}
  \hline
  \multicolumn{1}{|c|}{\textbf{Z}}\\
  \hline
  \\
  \\
  \\
  \\
  \arrayrulecolor{red}\hline
  \\
  \arrayrulecolor{black}\hline
  \end{tabular}
};

\node[
  rectangle,
  draw = none,
  align = center,
  right = .4cm of crosscat-structure-2
] (crosscat-structure-3){
  \tiny
  \begin{tabular}{%
    |>{\columncolor{gray!40}}c%
    |>{\columncolor{gray!40}}c%
    |>{\columncolor{gray!40}}c%
    |>{\columncolor{gray!40}}c|}
  \hline
  \multicolumn{1}{|c|}{\textbf{X}}
    & \multicolumn{1}{c|}{\textbf{Y}}
    & \multicolumn{1}{c|}{\textbf{W}}
    & \multicolumn{1}{c|}{\textbf{Z}} \\
  \hline
  & & & \\
  \arrayrulecolor{red}\hline
  & & & \\
  & & & \\
  \arrayrulecolor{red}\hline
  & & & \\
  & & & \\
  \arrayrulecolor{black}\hline
  \end{tabular}%
};


\node[
    rectangle,
    align = center,
    above = 0 of crosscat-structure-1.90,
] (crosscat-structure-1-header) {\footnotesize
    Model $\hat{\G}_1$
};

\node[
    rectangle,
    align = center,
    above = 0 of crosscat-structure-2.90,
] (crosscat-structure-2-header) {\footnotesize
    Model $\hat{\G}_2$
};

\node[
    rectangle,
    align = center,
    above = 0 of crosscat-structure-3.90,
] (crosscat-structure-3-header) {\footnotesize
    Model $\hat{\G}_3$
};

\node[
    rectangle,
    align = center,
    right = 0.2 of crosscat-structure-3,
] (crosscat-structure-dots-header) {\footnotesize
    \textbf{\dots}
};

\draw[-stealth,thick]
    (crosscat-header.south) -- (crosscat-structure-1-header.north);

\draw[-stealth,thick]
    (crosscat-header.south) -- (crosscat-structure-2-header.north);

\draw[-stealth,thick]
    (crosscat-header.south) -- (crosscat-structure-3-header.north);


\node[
    rectangle,
    draw = none,
    align = center,
    below = 1cm of data-table.south
] (bql-query-header) {\footnotesize
    \textbf{BQL CMI Query}
};

\node[
    rectangle,
    draw = black,
    align = center,
    minimum height = 1.6cm,
    below = 0cm of bql-query-header.south
] (bql-query) {
    \lstset{
      basicstyle=\ttfamily\scriptsize,
      columns=fullflexible,
      keepspaces=true,
      alsoletter={\.,\%},
      morekeywords=[1]{SIMULATE, MUTUAL, INFORMATION, OF, WITH, FROM, GIVEN,
        MODELS},
      keywordstyle=[1]\textcolor{OliveGreen},
    }
\begin{lstlisting}
%bql SIMULATE
...    MUTUAL INFORMATION OF
...    X WITH Y GIVEN W
...  FROM MODELS OF population
\end{lstlisting}
};


\node[
    rectangle,
    align = center,
    minimum width = 4cm,
    right = 1.cm of bql-query-header,
] (bayesdb-cmi-estimator-header) {\footnotesize
    \textbf{BayesDB Query Engine}
};

\node[
    rectangle,
    draw = none,
    fill = gray!40!white,
    align = center,
    minimum width = 4cm,
    minimum height = 1.6cm,
    below = 0cm of bayesdb-cmi-estimator-header,
] (bayesdb-cmi-estimator) {
    \textsc{CrossCat-Cmi}\\
    (Algorithm~\ref{alg:crosscat-cmi})
};

\draw[-stealth,thick]
    (bql-query) -- (bayesdb-cmi-estimator);

\draw[-stealth]
    (crosscat-structure-1.south) -- (bayesdb-cmi-estimator-header.north);

\draw[-stealth]
    (crosscat-structure-2.south) -- (bayesdb-cmi-estimator-header.north);

\draw[-stealth]
    (crosscat-structure-3.south) -- (bayesdb-cmi-estimator-header.north);


\node[
    rectangle,
    draw = none,
    right = .6cm of bayesdb-cmi-estimator.east,
] (ihat-structure-2) {\scriptsize
    $\I_{\hat\G_2}(X{:}Y|W)$
};

\node[
    rectangle,
    draw = none,
    above = .4cm of ihat-structure-2.south,
] (ihat-structure-1) {\scriptsize
    $\I_{\hat\G_1}(X{:}Y|W)$
};

\node[
    rectangle,
    draw = none,
    below = .4cm of ihat-structure-2.north,
] (ihat-structure-3) {\scriptsize
    $\I_{\hat\G_3}(X{:}Y|W)$
};

\node[
    rectangle,
    draw = none,
    align = left,
    below = .5cm of ihat-structure-3.north,
] (ihat-structure-dots) {\scriptsize
    $\dots$
};

\draw[-stealth] (bayesdb-cmi-estimator) -- (ihat-structure-2.west);
\draw[-stealth] (bayesdb-cmi-estimator) -- (ihat-structure-1.west);
\draw[-stealth] (bayesdb-cmi-estimator) -- (ihat-structure-3.west);


\node[
    rectangle,
    align = center,
    minimum width = 4cm,
    right = 2.75cm of bayesdb-cmi-estimator-header,
] (cmi-curve-header) {\footnotesize
    \textbf{CMI Posterior Distribution}
};

\node[
    rectangle,
    inner sep = 0pt,
    below = 0 cm of cmi-curve-header,
] (cmi-curve) {
    \includegraphics[width=5cm]{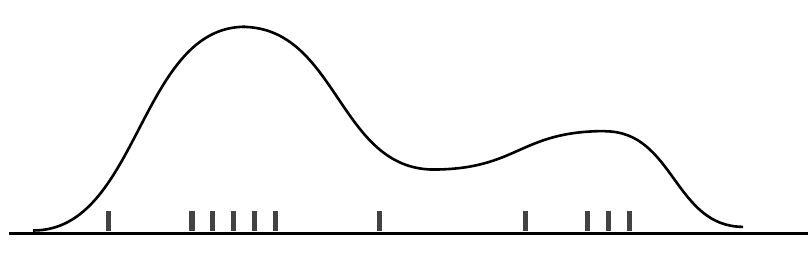}
};

\draw[-stealth] (ihat-structure-2.east) -- (cmi-curve);
\draw[-stealth] (ihat-structure-1.east) -- (cmi-curve);
\draw[-stealth] (ihat-structure-3.east) -- (cmi-curve);







\end{tikzpicture}

\caption{%
Workflow for computing posterior distributions of the CMI for variables in a
data table using BayesDB.
Modeling and inference in BayesDB produces an ensemble of posterior CrossCat
samples.
Each model learns a factorization of the joint distribution of all variables in
the database, and a Dirichlet process mixture within each block of
dependent variables.
For instance, model $\hat\G_1$ specifies that $X$ is independent of $(Y,W)$
which in turn is independent of $Z$, while in $\hat\G_3$, all variables are
(structurally) dependent.
End-user queries for the CMI are expressed in the Bayesian Query Language.
The BQL interpreter uses CrossCat structures to optimize the query where
possible, by (i) bypassing Monte Carlo estimation completely when the queried
variables are structurally independent, and/or (ii) dropping redundant
constraints which are structurally independent of the queried variables.
Values of CMI returned by each model constitute samples from the posterior
CMI distribution.}
\label{fig:workflow}

\end{figure*}

\section{Drawing Bayesian inferences about conditional mutual information}
\label{sec:bayesian-cmi}

Let $\x=(x_1,x_2,\dots,x_D)$ denote a $D$-dimensional random vector, whose
\mbox{sub-vectors} we denote $\x_\mtA = \set{x_i: i \in \mtA}$ with joint
probability density $\pt(\x_\mtA)$.
The symbol $\G$ refers to an arbitrary specification for the ``generative''
process of $\x$, and parameterizes all its joint and conditional densities.
The \textit{mutual information} (MI) of the variables $\x_\mtA$ and $\x_\mtB$
(under generative process $\G$) is defined in the usual way \cite{cover2012}:
\begin{align}
&\I_\G(\x_\mtA{:}\x_\mtB) =
  \mathop\mathbb{E}_{(\x_\mtA,\x_\mtB)}
    \left[ \log \left(
      \frac{\pt(\x_\mtA,\x_\mtB)}
      {\pt(\x_\mtA)\pt(\x_\mtB)}
    \right)
  \right].
\label{eq:def-mi}
\end{align}
The mutual information can be interpreted as the KL-divergence from
the product of marginals $\pt(\x_\mtA)\pt(\x_\mtB)$ to the joint
distribution $\pt(\x_\mtA,\x_\mtB)$, and is a well-established measure
for both the existence and strength of dependence between $\x_\mtA$
and $\x_\mtB$ (Section~\ref{subsec:extracting-cmi}).
Given an observation of the variables $\set{\x_\mtC{=}\hx_\mtC}$, the
\textit{conditional mutual information} (CMI) of $\x_\mtA$ and $\x_\mtB$ given
$\set{\x_\mtC{=}\hx_\mtC}$ is defined analogously:
\begin{multline}
\I_{\G}(\x_\mtA{:}\x_\mtB|\x_\mtC{=}\hx_\mtC) = \\
\mathop\mathbb{E}_{(\x_\mtA,\x_\mtB)|\hx_\mtC}
    \left[ \log \left(
      \frac{\pt(\x_\mtA,\x_\mtB|\hx_\mtC)}
      {\pt(\x_\mtA|\hx_\mtC)\pt(\x_\mtB|\hx_\mtC)}
    \right)
  \right].
\label{eq:def-cmi}
\end{multline}
Estimating the mutual information between the variables of $\x$ given a dataset
of observations $\D$ remains an open problem in the literature.
Various parametric and non-parametric methods for estimating MI
exist \cite{moddemeijer1989,moon1995,kraskov2004}; see \cite{paninski2003} for
a comprehensive review.
Traditional approaches typically construct a point estimate
$\hat\I(\x_\mtA{:}\x_\mtB)$ (and possible confidence intervals) assuming a
``true value'' of $\I(\x_\mtA{:}\x_\mtB)$.
In this paper, we instead take a non-parametric Bayesian approach, where the
mutual information itself is a derived random variable; a similar interpretation
was recently developed in independent work \cite{kunihama2016}.
The randomness of mutual information arises from treating the data generating
process and parameters $\G$ as a random variable, whose prior distribution we
denote $\pi$.
Composing $\G$ with the function $h: \hat\G
\mapsto \I_{\hat\G}(\x_\mtA{:}\x_\mtB)$ induces the derived random variable
$h(\G) \equiv \I_{\G}(\x_\mtA{:}\x_\mtB)$.
The distribution of the MI can thus be expressed as an expectation under
distribution $\pi$:
\begin{align}
\mathbb{P} \left[ \I_{\G}(\x_\mtA{:}\x_\mtB) \in S \right]
&= \int \mathbb{I} \left[ \I_{\hat\G} (\x_\mtA{:}\x_\mtB) \in S \right]
  \pi(d\hat\G) \notag\\
&= \mathop\mathbb{E}_{\hat\G\sim\pi} \left[
    \mathbb{I} \left[ \I_{\hat\G} (\x_\mtA{:}\x_\mtB) \in S \right]
  \right].
\label{eq:mi-posterior}
\end{align}
Given a dataset $\D$, we define the posterior distribution of the mutual
information, $\mathbb{P}\left[\I_{\G}(\x_\mtA{:}\x_\mtB) \in S |\D\right]$ as
the expectation in Eq~\eqref{eq:mi-posterior} under the posterior
$\pi(\cdot|\D)$.
We define the distribution over conditional mutual information
$\mathbb{P} \left[ \I_{\G}(\x_\mtA{:}\x_\mtB|\hx_\mtC) \in S \right]$
analogously to Eq~\eqref{eq:mi-posterior}, substituting the CMI
\eqref{eq:def-cmi} inside the expectation.

\subsection{Estimating CMI with generative population models}
\label{subsec:estimating-cmi}

Monte Carlo estimates of CMI can be formed for models expressed as
\textit{generative population models} \cite{mansinghka2015,saad2016}, a
probabilistic programming formalism for characterizing the data generating
process of an infinite array of realizations of random vector
$\x=(x_1,x_2,\dots,x_D)$.
Listing~\ref{lst:gpm-interface} summarizes elements of the GPM interface.
\begin{algorithm}[H]
\small
\floatname{algorithm}{Listing}
\caption{GPM interface for simulating from and assessing the density of
conditional and marginal distributions of a random vector $\x$.}
\label{lst:gpm-interface}
\textsc{Simulate}(%
  $\G$,
  query: $\mtQ{=}\set{q_j}$,
  condition: $\hx_\mtE{=}\set{\hat{x}_{e_j}}$)

\quad Return a sample $\mathbf{s} \sim \pt(\x_\mtQ|\hx_\mtE, \D).$

\medskip

\textsc{LogPdf}(%
  $\G$,
  query: $\hx_\mtQ{=}\set{\hat{x}_{q_j}}$,
  condition: $\hx_\mtE{=}\set{\hat{x}_{e_j}}$)

\quad Return the joint log density $p_\G(\hx_\mtQ|\hx_\mtE,\D)$
\end{algorithm}
\vspace{-.5cm}
These two interface procedures can be combined to derive a simple Monte Carlo
estimator for the CMI \eqref{eq:def-cmi}, shown in Algorithm~\ref{alg:gpm-cmi}.
\begin{subalgorithms}
\small
\captionof{algorithm}{\textsc{Gpm-Cmi}}
\label{alg:gpm-cmi}
\begin{algorithmic}[1]
\Require{%
  GPM $\G$; query $\mtA$, $\mtB$; condition $\hx_\mtC$; accuracy $T$}
\Ensure Monte Carlo estimate of
  $\I_\G\left(
    \x_\mtA{:}\x_\mtB|\x_\mtC{=}\hx_\mtC
  \right)$
\For{$t=1,\dots,T$}
  \State $(\hx_\mtA, \hx_\mtB)
    \gets \textsc{Simulate}(%
      \G, \mtA\cup\mtB, \hx_\mtC)$
  \State $m^t_{\mtA\cup\mtB} \gets \textsc{LogPdf}(%
      \G,\hx_{\mtA\cup\mtB}, \hx_\mtC)$
  \State $m^t_\mtA \gets \textsc{LogPdf}(%
      \hx_\mtA, \hx_\mtC)$
  \State $m^t_\mtB \gets \textsc{LogPdf}(%
      \hx_\mtB, \hx_\mtC)$
\EndFor
\State \Return $\frac{1}{T}\sum_{t=1}^T\left(
    m^t_{\mtA\cup\mtB} - (m^t_\mtA + m^t_\mtB)
  \right)$
\end{algorithmic}
\hrule
\end{subalgorithms}
While \textsc{\small Gpm-Cmi} is an unbiased and consistent estimator applicable
to any probabilistic model implemented as a GPM, its quality in detecting
dependencies is tied to the ability of $\G$ to capture patterns from the dataset
$\D$; this paper uses baseline non-parametric GPMs built using CrossCat
(Section~\ref{sec:gpms}).

\subsection{Extracting conditional independence relationships from CMI
estimates}
\label{subsec:extracting-cmi}

An estimator for the CMI can be used to discover several forms of independence
relations of interest.
\paragraph{Marginal Independence} It is straightforward to see that $(\x_\mtA
\indep_{\G} \x_\mtB)$ if and only if $\I_{\G}(\x_\mtA{:}\x_\mtB) = 0$.

\paragraph{Context-Specific Independence} If the event
$\set{\x_\mtC{=}\hx_\mtC}$ decouples $\x_\mtA$ and $\x_\mtB$, then
they are said to be independent ``in the context'' of $\hx_\mtC$, denoted
$(\x_\mtA \indep_\G \x_\mtB | \set{\x_\mtC{=}\hx_\mtC})$ \cite{boutilier1996}.
This condition is equivalent to the CMI from \eqref{eq:def-cmi} equaling zero.
Thus by estimating CMI, we are able to detect finer-grained independencies
than can be detected by analyzing the graph structure of a learned Bayesian
network \cite{shachter1998}.

\paragraph{Conditional Independence}
If context-specific independence holds for all possible observation sets
$\set{\x_\mtC{=}\hx_\mtC}$, then $\x_\mtA$ and $\x_\mtB$ are
\textit{conditionally independent} given $\x_\mtC$, denoted
$(\x_\mtA \indep_\G \x_\mtB | \x_\mtC)$.
By the non-negativity of CMI, conditional independence implies the
CMI of $\x_\mtA$ and $\x_\mtB$, marginalizing out $\x_\mtC$, is zero:
\begin{align}
\I_\G(\x_\mtA{:}\x_\mtB|\x_\mtC) =
  \mathop\mathbb{E}_{\hx_\mtC}\left[
    \I_\G(\x_\mtA{:}\x_\mtB|\x_\mtC=\hx_\mtC)
  \right] = 0.
\label{eq:def-cmi-marginal}
\end{align}
Figure~\ref{fig:vstruct-common} illustrates different CMI queries which are used
to discover these three types of dependencies in various data generators;
Figure~\ref{fig:bql} shows CMI queries expressed in the Bayesian
Query Language.


\section{Building generative population models for CMI estimation with
non-parametric Bayes}
\label{sec:gpms}


\begin{figure*}
\centering

\begin{subfigure}[t]{.15\linewidth}
\includestandalone[width=\linewidth]{figures/vstruct-graph}
\subcaption{\small Ground truth ``common-effect'' generator
\cite{sejdinovic2013}.}
\label{fig:vstruct-graph}
\end{subfigure}%
\begin{subfigure}[t]{.85\linewidth}
\includegraphics[width=\linewidth]{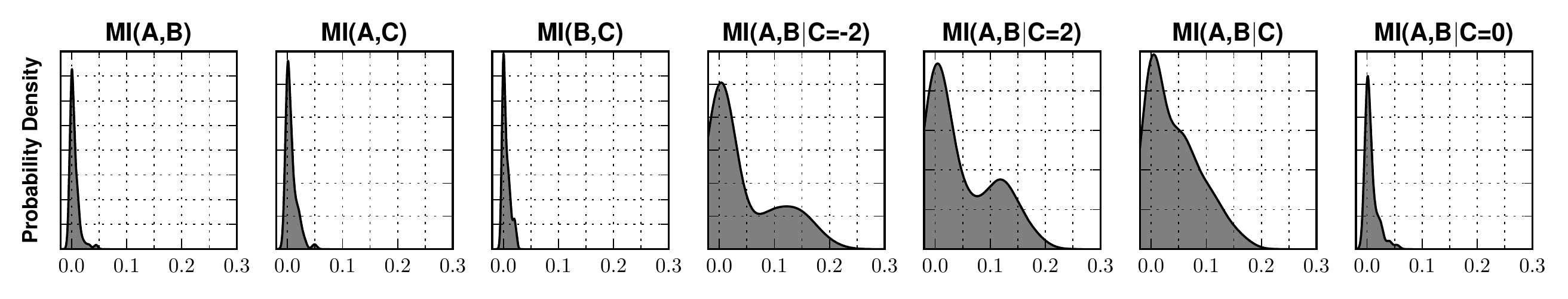}
\captionsetup{width=.95\linewidth}
\subcaption{The first three plots verify that $A$, $B$, and $C$ are marginally
independent. The next three plots show that conditioning on $C$ ``couples'' the
parents $A$ and $B$ (both for fixed values of $C\in\set{2,-2}$, and
marginalizing over all $C$). The last plot shows that $\set{C=0}$ does not
couple $A$ and $B$, due to symmetry of signum.}
\label{fig:vstruct-cmi}
\end{subfigure}

\bigskip

\begin{subfigure}[t]{.15\linewidth}
\includestandalone[width=\linewidth]{figures/common-graph}
\subcaption{\small Ground truth ``common-cause'' generator.}
\label{fig:common-graph}
\end{subfigure}%
\begin{subfigure}[t]{.85\linewidth}
\includegraphics[width=\linewidth]{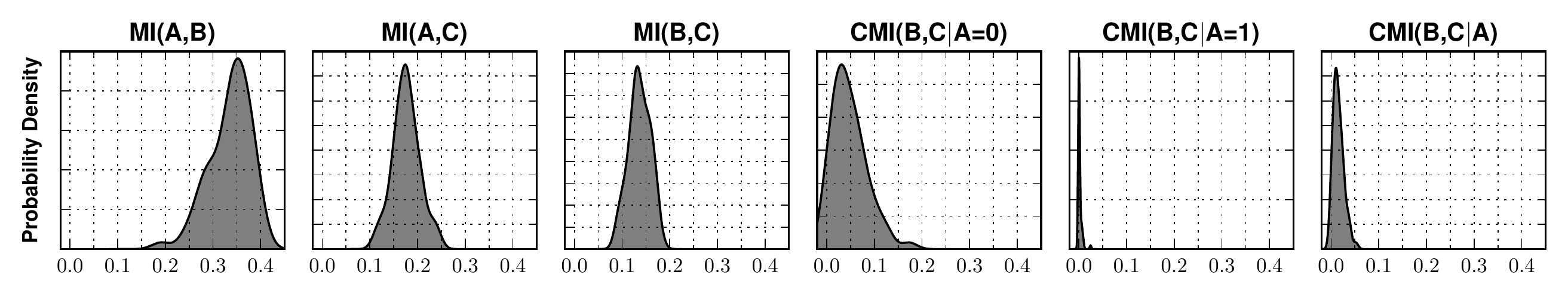}
\captionsetup{width=.95\linewidth}
\subcaption{The first three plots verify that $A$, $B$, and $C$ are marginally
dependent. The next three plots show that conditioning on $A$ ``decouples'' the
children $B$ and $C$; the decoupling is weaker for $\set{A=0}$, because it is
3.4 nats less likely that $\set{A=1}$. The final plot shows the weighted CMI
under these two possibilities.}
\label{fig:common-cmi}
\end{subfigure}

\caption{Posterior distributions of CMI under the DPMM posterior, given 100
data points from canonical Bayes net structures. Distributions peaked at 0
indicate high probability of (conditional) independence. In both cases, the
posterior CMI distributions correctly detect the marginal, conditional, and
context-specific independences in the ``ground truth'' Bayes nets, despite the
fact that both ``common-cause'' and ``common-effect'' structures are not in the
(structural) hypothesis space of the DPMM prior.}
\label{fig:vstruct-common}
\end{figure*}


\begin{figure*}[t]
\centering

\lstset{
  basicstyle=\ttfamily\footnotesize,
  columns=fullflexible,
  keepspaces=true,
  alsoletter={\.,\%},
  morekeywords=[1]{CREATE, TABLE, FROM, FOR, AND, EXPOSE, GUESS, POPULATION,
    METAMODEL, ANALYZE, INITIALIZE, INFER, EXPLICIT, PREDICT, USING,
    SAMPLES, MODEL, OVERRIDE, GENERATIVE, ITERATION, SIMULATE, SELECT,
    VARIABLES, CONDITIONAL, PROBABILITY, MUTUAL, INFORMATION, WITH, LIMIT,
    WHERE, OF, BY, MODELS, GIVEN, ESTIMATE},
  keywordstyle=[1]\textcolor{OliveGreen},
  morekeywords=[2]{NUMERICAL, NOMINAL},
  keywordstyle=[2]\textcolor{blue},
  morekeywords=[3]{probabilistic_pca},
  keywordstyle=[3]\textcolor{red},
  morekeywords=[4]{.scatter},
  keywordstyle=[4]\textcolor{cyan},
  morekeywords=[5]{\%mml, \%bql},
  keywordstyle=[5]\textcolor{magenta},
  morecomment=[l]{----},
  commentstyle=\color{gray},
  upquote=true,
  morestring=[b]',
  stringstyle=\color{orange}
}

\def\bordercolor{none}

\def\distanceVerticalHeader{.1cm}

\def\distanceVertical{0.1cm}
\def\distanceLabelPlot{0.3cm}
\def\distanceHorizontal{0.1cm}
\def\distanceHorizontalSmall{0.05cm}

\def\imagewidth{0.3\textwidth}

\def\boxheight{3cm}
\def\boxheightsmall{2cm}

\def\boxwidthBQL{4.50cm}
\def\boxwidthText{4.25cm}
\def\boxwidthAlg{7.25cm}

\usetikzlibrary{positioning}

\begin{tikzpicture}[thick]


\node[
    draw=\bordercolor,
    rectangle,
    minimum width=\boxwidthText,
    minimum height=\boxheightsmall,
    text width=\boxwidthText,
    align=left
] (english_1){\small
Simulate from the posterior distribution of the mutual information of
$(x_1, x_2)$ with $x_3$, given $x_4=14$.
};

\node[
    draw=\bordercolor,
    rectangle,
    below=\distanceVertical of english_1,
    minimum width=\boxwidthText,
    minimum height=\boxheight,
    text width=\boxwidthText,
    align=left
] (english_2){\small
Estimate the probability that the mutual information of
$(x_1, x_2)$ with $x_3$, given $x_4=14$ and marginalizing over $x_5$,
is less than 0.1 nats.
};

\node[
    draw=\bordercolor,
    rectangle,
    below=\distanceVertical of english_2,
    minimum width=\boxwidthText,
    minimum height=\boxheight,
    text width=\boxwidthText,
    align=left
] (english_3){\small
Synthesize a hypothetical dataset with 100 records, including
only those variables which are probably independent of $x_2$.
};

\node[
    draw=\bordercolor,
    rectangle,
    right=\distanceHorizontal of english_1,
    minimum width=\boxwidthBQL,
    minimum height=\boxheightsmall,
    align=left
]
    (bql_1) {
\begin{lstlisting}
SIMULATE
  MUTUAL INFORMATION OF
    (x1, x2) WITH (x3)
  GIVEN (x4 = 14)
FROM MODELS OF population
\end{lstlisting}
};

\node[
    draw=\bordercolor,
    rectangle,
    below=\distanceVertical of bql_1,
    minimum width=\boxwidthBQL,
    minimum height=\boxheight,
    align=left
] (bql_2) {
\begin{lstlisting}
ESTIMATE PROBABILITY OF
  MUTUAL INFORMATION OF
    (x1, x2) WITH (x3)
    GIVEN (x4 = 14, x5)
  < 0.1
BY population
\end{lstlisting}
};

\node[
    draw=\bordercolor,
    rectangle,
    below=\distanceVertical of bql_2,
    minimum width=\boxwidthBQL,
    minimum height=\boxheight
] (bql_3) {
\begin{lstlisting}
SIMULATE (
  SELECT * FROM
  VARIABLES OF population
  WHERE PROBABILITY OF
    MUTUAL INFORMATION
    WITH x2 < 0.1
  > 0.9)
FROM population
LIMIT 100;
\end{lstlisting}
};

\node[
    draw=\bordercolor,
    rectangle,
    right=\distanceHorizontal of bql_1,
    minimum height=\boxheightsmall,
    minimum width=\boxwidthAlg,
    text width=\boxwidthAlg
] (image_1){
\begin{varwidth}{\linewidth}
\begin{algorithmic}[1]
\algrenewcommand\algorithmicindent{.5em}%
\scriptsize
\For{$\G_k \in \mathcal{M}$}
    \State $\I_{\G_k} \gets \textsc{Gpm-Cmi}(
        \G_k, \set{x_1,x_2}, \set{x_3}, \set{(x_4,14)})$
\EndFor
\State \Return $(\I_{\G_1}, \dots, \I_{\G_{|\mathcal{M}|}})$
\end{algorithmic}
\end{varwidth}
};

\node[
    draw=\bordercolor,
    rectangle,
    below=\distanceVertical  of image_1,
    minimum height=\boxheight,
    minimum width=\boxwidthAlg,
    text width=\boxwidthAlg
] (image_2) {
\begin{varwidth}{\linewidth}
\begin{algorithmic}[1]
\algrenewcommand\algorithmicindent{.5em}%
\scriptsize
\For{$\G_k \in \mathcal{M}$}
  \For{$t=1,\dots,T$}
    \State $\hat{x}_5^t \gets \textsc{Simulate}(\G_k, x_5, \set{(x_4,14)})$
    \State $\I_{\G_k}^t \gets \textsc{Gpm-Cmi}($
    \Statex[3] $\G_k, \set{x_1,x_2}, \set{x_3},
        \set{(x_4,14), (x_5,\hat{x}_5^t)})$
  \EndFor
  \State $\I_{\G_k} \gets \frac{1}{T}\sum_{t} \left(\I_{\G_k}^t\right)$
\EndFor
\State \Return $\frac{1}{|\mathcal{M}|}\sum_{j}
  \left( \mathbb{I}\left[ \I_{\G_k} < 0.1 \right] \right)$
\end{algorithmic}
\end{varwidth}
};

\node[
    draw=\bordercolor,
    rectangle,
    below=\distanceVertical  of image_2,
    minimum height=\boxheight,
    text width=\boxwidthAlg
] (image_3) {
\begin{varwidth}{\linewidth}
\begin{algorithmic}[1]
\algrenewcommand\algorithmicindent{.5em}%
\scriptsize
\State $\mathcal{S} \gets \varnothing$
\For{$x_i \in (x_1,\dots,x_D)$}
  \For{$\G_k \in \mathcal{M}$}
    \State $\I_{\G_k} \gets \textsc{Gpm-Cmi}(\G_k, x_i, x_2, \varnothing)$
  \EndFor
  \State $p_i \gets
    \frac{1}{|\mathcal{M}|}
        \sum_k{\mathbb{I}\left[ \I_{\G_k} < 0.1 \right]}$
  \If{$p_i > 0.9$}
    \State $\mathcal{S} \gets \mathcal{S} \cup \set{x_i}$
  \EndIf
\EndFor
\For{$t=1\dots,100$}
    \State $s_t \gets
        \textsc{Simulate}(\mathcal{M}, \mathcal{S}, \varnothing)$
\EndFor
\State \Return $(s_1,\dots,s_{100})$
\end{algorithmic}
\end{varwidth}
};

\node[
    draw=\bordercolor,
    rectangle,
    above=\distanceVerticalHeader of english_1,
    text width=\boxwidthText
](header_1) {
\textbf{\footnotesize English Summary of\\CMI Query}
};

\node[
    draw=\bordercolor,
    rectangle,
    above=\distanceVerticalHeader of bql_1,
    text width=\boxwidthBQL
] (header_2) {
\textbf{\footnotesize CMI Query in Bayesian Query Language}
};

\node[
    draw=\bordercolor,
    rectangle,
    above=\distanceVerticalHeader of image_1,
    text width=\boxwidthAlg
] (header_3) {
\textbf{\footnotesize Inference Algorithm Invoked by\\Query Interpreter}
};


\draw (header_1.south west) -- (header_3.south east);

\draw (english_1.south west) -- (image_1.south east);
\draw (english_2.south west) -- (image_2.south east);


\end{tikzpicture}
\hrule
\medskip
\caption{%
End-user CMI queries in the Bayesian Query Language for three data
analysis tasks;
(top) evaluating the strength of predictive relationships;
(middle) specifying the amount of evidence required for a ``predictively''
significant relationship;
(bottom) synthesizing a hypothetical population, censoring probably sensitive
variables.}
\label{fig:bql}

\end{figure*}

Our approach to estimating the CMI requires a prior $\pi$ and model class $\G$
which is flexible enough to emulate an arbitrary joint distribution over $\x$,
and tractable enough to implement Algorithm~\ref{alg:gpm-cmi} for
its arbitrary \mbox{sub-vectors}.
We begin with a Dirichlet process mixture model (DPMM) \cite{gorur2010}.
Letting $L_d$ denote the likelihood for variable $d$, $V_d$ a prior over the
parameters of $L_d$, and $\blambda_d$ the hyperparameters of $V_d$, the
generative process for $N$ observations
$\D{=}\set{\x_{[i,1:D]}:1 \le i \le N}$ is:
\vspace{-0.1cm}
\begin{flalign*}
&\underline{\textsc{\small DPMM-Prior}}\\
& \alpha \sim \textsc{\small Gamma}(1,1)
  && \\
& \z = (z_1,\dots,z_N) \sim \textsc{\small CRP}(\cdot|\alpha)
  && \\
& \bphi_{[d,k]} \sim V_d(\cdot|\blambda_d)
  && d \in [D], k\in\textsc{\small Unique}(\z)\\
& x_{[i,d]} \sim L_d(\cdot|\bphi_{[d,z_i]})
  && i \in [N],d \in [D]
\end{flalign*}
We refer to \cite{escobar1995,jain2012} for algorithms for posterior inference,
and assume we have a posterior sample $\hat\G=(\alpha,\z_{[1:N]},\set{\bphi_d})$
of all parameters in the DPMM.
To compute the CMI of an arbitrary query pattern
$\I_{\hat\G}(\x_\mtA{:}\x_\mtB|\x_\mtC{=}\hx_\mtC)$ using
Algorithm~\ref{alg:gpm-cmi}, we need implementations of \textsc{\small Simulate}
and \textsc{\small LogPdf} for $\hat\G$.
These two procedures are summarized in Algorithms~\ref{alg:dpmix-simulate},
\ref{alg:dpmix-logpdf}.
\begin{subalgorithms}
\small
\captionof{algorithm}{\textsc{\small DPMM-Simulate}}
\label{alg:dpmix-simulate}
\begin{algorithmic}[1]
\Require DPMM $\G$; target $\mtA$; condition $\hx_\mtC$
\Ensure joint sample $\hx_\mtA \sim \pt(\cdot|\hx_\mtC)$
\State $(l_i)_{i=1}^{K+1} \gets
  \textsc{DPMM-Cluster-Posterior}(\G,\hx_\mtC)$
\State $z_{N+1} \sim \textsc{Categorical}(l_1,\dots,l_{K+1})$
\For{$a \in \mtA$}
  \State $\hat{x}_a \sim L_a(\cdot|\bphi_{[a,z_{N+1}]})$
  \label{algline:dpmix-simulate-likelihood}
\EndFor
\State \Return $\hx_\mtA$
\end{algorithmic}
\vspace{-.3cm}

\captionof{algorithm}{\textsc{\small DPMM-LogPdf}}
\label{alg:dpmix-logpdf}
\begin{algorithmic}[1]
\Require DPMM $\G$; target $\hx_\mtA$; condition $\hx_\mtC$
\Ensure log density $\pt(\hx_\mtA|\hx_\mtC)$
\State $(l_i)_{i=1}^{K+1} \gets
  \textsc{DPMM-Cluster-Posterior}(\G, \hx_\mtC)$
\For{$k = 1,\dots,K+1$}
  \State $t_k \gets
    \prod_{a\in\mtA}L_a(\hat{x}_a|\bphi_{[a,k]})$
    \label{algline:dpmix-logpdf-likelihood}
\EndFor
\State \Return
  $\log\left(\sum_{k=1}^{K+1}\left(t_{k}l_{k}\right)\right)$
\end{algorithmic}
\vspace{-.3cm}
\captionof{algorithm}{\textsc{\small DPMM-Cluster-Posterior}}
\label{alg:dpmix-cluster-posterior}
\begin{algorithmic}[1]
\Require DPMM $\G$; condition $\hx_\mtC$;
\Ensure $\set{\pt(z_{N+1}=k):1\le{k}\le\max(\z_{1:N})+1}$
\State $K \gets \max(\z_{1:N})$
\For{$k=1,\dots,K+1$}
  \State $n_k \gets \begin{cases}
    |\set{\x_i\in\D: z_i=k}| & \text{if } k\le K\\
    \alpha & \text{if } k = K+1
    \end{cases}$
  \State $l_k \gets
    \left(
      \prod_{c\in\mtC}L_c(\hat{x}_c|\bphi_{[c,k]})
    \right)n_k$
\EndFor
\State \Return $(l_1,\dots,l_{K+1}) / \sum_{k=1}^{K+1}(l_k)$
\end{algorithmic}
\hrule
\end{subalgorithms}

The subroutine \textsc{\small DPMM-Cluster-Posterior} is used for
sampling (in \textsc{\small DPMM-Simulate}) and marginalizing over (in
\textsc{\small DPMM-LogPdf}) the non-parametric mixture components.
Moreover, if $L_d$ and $V_d$ form a conjugate likelihood-prior pair, then
invocations of $L_d(\hat{x}_d|\bphi_{[d,k]})$ in
Algorithms \ref{alg:dpmix-simulate}:\ref{algline:dpmix-simulate-likelihood} and
\ref{alg:dpmix-logpdf}:\ref{algline:dpmix-logpdf-likelihood} can be
\mbox{Rao-Blackwellized} by conditioning on the sufficient statistics of data
in cluster $k$, thus marginalizing out $\bphi_{[d,k]}$ \cite{robert2005}.
This optimization is important in practice, since analytical marginalization can
be obtained in closed-form for several likelihoods in the exponential family
\cite{fink1997}.
Finally, to approximate the posterior distribution over CMI in
\eqref{eq:def-cmi}, it suffices to aggregate
$\textsc{\small DPMM-Cmi}$ from a set of posterior samples
$\set{\hat\G_1,\dots,\hat\G_H} \sim^\textrm{iid} \pi(\cdot|\D)$.
Figure~\ref{fig:vstruct-common} shows posterior CMI distributions from the DPMM
successfully recovering the marginal and conditional independencies in two
canonical Bayesian networks.

\subsection{Inducing sparse dependencies using the CrossCat prior}

The multivariate DPMM makes the restrictive assumption that all variables
$\x=(x_1,\dots,x_D)$ are (structurally) marginally dependent, where their joint
distribution fully factorizes conditioned on the mixture assignment $z$.
In high-dimensional datasets, imposing full structural dependence among all
variables is too conservative.
Moreover, while the Monte Carlo error of
\mbox{Algorithm~\ref{alg:gpm-cmi}} does not scale with the dimensionality $D$,
its runtime scales linearly for the DPMM, and so estimating the CMI is
likely to be prohibitively expensive.
We relax these constraints by using CrossCat \cite{mansinghka2016}, a structure
learning prior which induces sparsity over the dependencies between the
variables of $\x$.
In particular, CrossCat posits a factorization of $\x$ according to a
\textit{variable partition} $\bgamma = \set{\mtV_1,\dots,\mtV_{|\bgamma|}}$,
where $\mtV_i \subseteq [D]$.
For $i\ne j$, all variables in block $\mtV_i$ are mutually (marginally and
conditionally) independent of all variables in $\mtV_j$.
The factorization of $\x$ given the variable partition $\bgamma$ is therefore
given by:
\begin{align}
\pt(\x |\D) &= \prod_{\mtV \in \bgamma} \ptv(\x_\mtV|\D_\mtV).
\label{eq:cc-factorization}
\end{align}
Within block $\mtV$, the variables $\x_\mtV = \set{x_d: d\in \mtV}$ are
distributed according to a multivariate DPMM; subscripts with $\mtV$ (such as
$\G_\mtV$) now index a set of \mbox{block-specific} DPMM parameters.
The joint predictive density $\ptv$ is given by %
Algorithm~\ref{alg:dpmix-logpdf}:
\vspace{-.1cm}
\begin{align}
\ptv(\x_\mtV | \D) =
  \sum_{k=1}^{K_V+1} \left(
    \frac{n_{[\mtV,k]}\prod\limits_{d\in \mtV} \ptv(x_d|\bphi_{[d,k]})}
    {\sum_{k'}n_{[\mtV,k']}}
  \right). \label{eq:cc-dpmm}
\end{align}
The CrossCat generative process for $N$ observations
$\D{=}\set{\x_{[i,1:D]}:1\le i \le N}$ is summarized below.
\begin{align*}
&\underline{\textsc{\small CrossCat-Prior}}\\
& \alpha' \sim \textsc{\small Gamma}(1,1)
  && \\
& \bv = (v_1,\dots,v_D) \sim \textsc{\small CRP}(\cdot|\alpha')
  && \\
& \mtV_k \gets \set{i \in [D]: v_i = k}
  && k\in\textsc{\small Unique}(\bv)\\
& \set{x_{[i,\mtV_k]}}_{i=1}^{N} \sim \textsc{\small DPMM-Prior}
  && k\in\textsc{\small Unique}(\bv)
\end{align*}
We refer to \cite{mansinghka2016,obermeyer2014} for algorithms for posterior
inference in CrossCat, and assume we have a set of approximate samples
$\set{\hat\G_i: 1\le i \le H}$ of all latent CrossCat parameters from the posterior
$\pi(\cdot|\D)$.

\subsection{Optimizing a CMI query}
The following lemma shows how CrossCat induces sparsity for a multivariate CMI
query.
\begin{lemma}
Let $\G$ be a posterior sample from CrossCat, whose full joint distribution is
given by \eqref{eq:cc-factorization} and \eqref{eq:cc-dpmm}. Then, for all
$\mtA,\mtB, \mtC \subseteq [D]$,
\begin{align*}
\I_\G \left( \x_\mtA : \x_\mtB \mid \hx_\mtC \right) =
  \sum_{\mtV\in\bgamma}
    \I_{\G_\mtV}(
      \x_{\mtA\cap\mtV} : \x_{\mtB\cap\mtV} \mid \hx_{\mtC\cap\mtV}
    ),
\end{align*}
where $\I_{\G_\mtV}
  (\x_{\mtA\cap{\mtV}} : \varnothing | \hx_{\mtC\cap\mtV} )\equiv 0$.
\label{lemma:crosscat-cmi-sparse}
\end{lemma}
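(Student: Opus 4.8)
The plan is to reduce the identity to the additivity of mutual information over independent factors, carefully tracking how each query/condition subset intersects the CrossCat blocks. The central technical ingredient is the block-wise factorization of \emph{arbitrary} conditional densities, which I would establish first.

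First I would show that for any subsets $\mtS,\mtC \subseteq [D]$ the conditional density factorizes across blocks as $\pt(\x_\mtS \mid \hx_\mtC) = \prod_{\mtV\in\bgamma} \ptv(\x_{\mtS\cap\mtV}\mid\hx_{\mtC\cap\mtV})$. This follows from the given joint factorization \eqref{eq:cc-factorization}: marginalizing the full joint over the variables outside $\mtS\cup\mtC$ preserves the product structure, since an integral of a product of variable-disjoint factors is the product of the integrals, giving $\pt(\x_{\mtS\cup\mtC}) = \prod_{\mtV}\ptv(\x_{(\mtS\cup\mtC)\cap\mtV})$. Dividing by $\pt(\hx_\mtC) = \prod_{\mtV}\ptv(\hx_{\mtC\cap\mtV})$ and using the distributivity $(\mtS\cup\mtC)\cap\mtV = (\mtS\cap\mtV)\cup(\mtC\cap\mtV)$ yields the claimed product of block-local conditionals.

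Next I would apply this factorization to each of the three densities inside the logarithm of the CMI definition \eqref{eq:def-cmi}: the joint $\pt(\x_\mtA,\x_\mtB\mid\hx_\mtC)$ (taking $\mtS=\mtA\cup\mtB$) and the two marginals $\pt(\x_\mtA\mid\hx_\mtC)$ and $\pt(\x_\mtB\mid\hx_\mtC)$. Taking logarithms turns the three products into sums over $\bgamma$, so the integrand collapses to $\sum_{\mtV}\log\bigl(\ptv(\x_{\mtA\cap\mtV},\x_{\mtB\cap\mtV}\mid\hx_{\mtC\cap\mtV}) / [\ptv(\x_{\mtA\cap\mtV}\mid\hx_{\mtC\cap\mtV})\,\ptv(\x_{\mtB\cap\mtV}\mid\hx_{\mtC\cap\mtV})]\bigr)$. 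By linearity the outer expectation distributes over this sum; and because the $\mtV$-th summand is a function only of $(\x_{\mtA\cap\mtV},\x_{\mtB\cap\mtV})$, whose marginal under the full conditional $(\x_\mtA,\x_\mtB)\mid\hx_\mtC$ is exactly $\ptv(\cdot\mid\hx_{\mtC\cap\mtV})$ by the factorization of the joint conditional above, each term equals the block-local CMI $\I_{\G_\mtV}(\x_{\mtA\cap\mtV}:\x_{\mtB\cap\mtV}\mid\hx_{\mtC\cap\mtV})$. Summing over blocks gives the stated identity.

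The remaining care is bookkeeping for degenerate blocks. If $\mtB\cap\mtV=\varnothing$ (symmetrically for $\mtA$), then $\ptv(\x_{\mtA\cap\mtV},\x_{\mtB\cap\mtV}\mid\hx_{\mtC\cap\mtV})$ reduces to $\ptv(\x_{\mtA\cap\mtV}\mid\hx_{\mtC\cap\mtV})$ while $\ptv(\x_{\mtB\cap\mtV}\mid\hx_{\mtC\cap\mtV})$ is the empty product $\equiv 1$, so the log-ratio vanishes pointwise and the block contributes $0$, consistent with the convention $\I_{\G_\mtV}(\x_{\mtA\cap\mtV}:\varnothing\mid\hx_{\mtC\cap\mtV})\equiv 0$. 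I expect the main obstacle to be notational rather than mathematical: stating the ``marginal-of-a-conditional-factorizes'' step precisely enough that reducing each summand's expectation to a block-local expectation is rigorous, since this is exactly where the mutual independence of distinct blocks (both marginal and conditional) is used. The underlying information-theoretic content is simply that mutual information is additive across independent components.
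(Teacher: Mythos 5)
Your proposal is correct and follows essentially the same route as the paper's proof: factorize the conditional densities across CrossCat blocks so the log-ratio splits into a sum of block-local log-ratios, apply linearity of expectation, and identify each term as a block-local CMI. You actually supply two justifications the paper leaves implicit---the derivation of the block-wise factorization of arbitrary conditionals from \eqref{eq:cc-factorization}, and the reduction of each summand's expectation to an expectation under the block-local conditional $\ptv(\cdot\mid\hx_{\mtC\cap\mtV})$, plus the degenerate empty-intersection case---so your write-up is, if anything, more complete than the one in Appendix~\ref{appx:proof-optimizing-cmi}.
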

\vspace{-.3cm}
\begin{proof}
\renewcommand{\qedsymbol}{}
Refer to Appendix~\ref{appx:proof-optimizing-cmi}.
\end{proof}


\begin{figure}[ht]
\centering
\includegraphics[width=\linewidth]{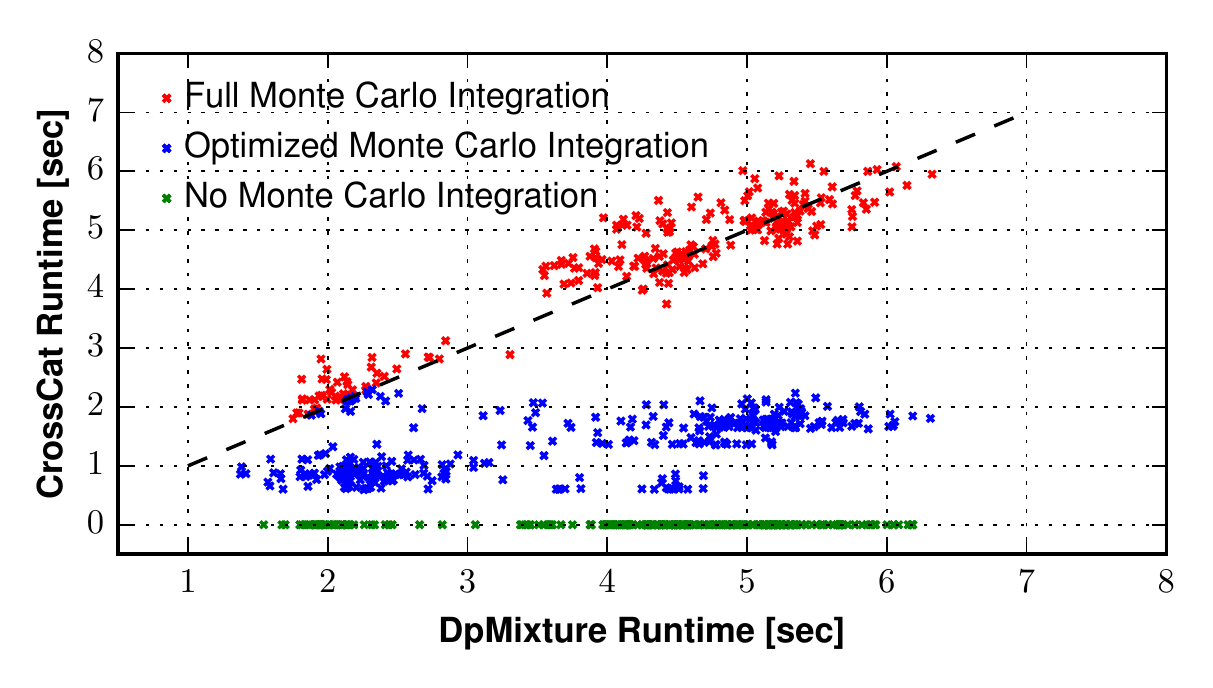}
\captionsetup{skip=-5pt}
\caption{\footnotesize Comparing the runtime of \textsc{\small CrossCat-Cmi}
\mbox{(Alg~\ref{alg:crosscat-cmi})} and \textsc{\small Gpm-Cmi}
\mbox{(Alg~\ref{alg:gpm-cmi})} (using the DPMM), on 1000 randomly generated
CMI queries from an 8-dimensional dataset.
The dashed curve shows the 45-degree line.
The green dots at 0 correspond to CrossCat detecting structural independence
between query variables, bypassing Monte Carlo estimation completely.
The blue dots (below diagonal) correspond to CrossCat optimizing the Monte
Carlo estimator by ignoring constraint variables which are structurally
independent of the target variables.
The red dots (along diagonal) correspond to CrossCat learning no structural
independences, requiring full Monte Carlo estimation and resulting in comparable
runtime to DPMM.
These three cases correspond to the three posterior CrossCat structures
illustrated in Figure~\ref{fig:workflow}, when the targets variables are
$X$ and $Y$ conditioned on $W$.}
\label{fig:runtime}
\hrule
\end{figure}

\vspace{-.3cm}
An immediate consequence of Lemma~\ref{lemma:crosscat-cmi-sparse} is that
structure discovery in CrossCat allows us to optimize Monte Carlo estimation of
$\I_\G(\x_\mtA:\x_\mtB|\x_\mtC=\hx_\mtC)$ by ignoring all target and
condition variables which are not in the same block $\mtV$, as shown in
Algorithm~\ref{alg:crosscat-cmi} and Figure~\ref{fig:runtime}.
\vspace{-.2cm}
\begin{subalgorithms}
\small
\captionof{algorithm}{\textsc{\small CrossCat-Cmi}}
\label{alg:crosscat-cmi}
\begin{algorithmic}[1]
\algrenewcommand\algorithmicindent{.5em}%
\Require{%
  CrossCat $\G$; query $\mtA$, $\mtB$; condition $\hx_\mtC$; acc. $T$}
\Ensure Monte Carlo estimate of $\I_\G(\x_\mtA{:}\x_\mtB|\x_\mtC{=}\hx_\mtC)$
\For{$\mtV \in \bgamma$}
  \If{ $\mtA\cap\mtV \;\; \textsc{\small and } \;\; \mtB\cap\mtV}$
    \State $i_\mtV \gets $\textsc{Gpm-Cmi}(%
      $\G_\mtV$,
      $\mtA \cap \mtV$,
      $\mtB \cap \mtV$,
      $\hx_{\mtC{\cap}{\mtV}}$,
      $T$)
  \Else
    \State $i_\mtV \gets 0$
  \EndIf
\EndFor
\State \Return $\sum_{\mtV\in\bgamma}i_\mtV$
\end{algorithmic}
\end{subalgorithms}

\subsection{Upper bounding the pairwise dependence probability}
\label{sub:crosscat-fast}

In exploratory data analysis, we are often interested in detecting pairwise
predictive relationships between variables $(x_i, x_j)$.
Using the formalism from Eq \eqref{eq:mi-posterior}, we can compute the
probability that their MI is non-zero: $\mathbb{P}\left[\I_{\G}(x_i:x_j) >
0\right]$.
This quantity can be upper-bounded by the posterior probability that $x_i$
and $x_j$ have the same assignments $v_i$ and $v_j$ in the CrossCat variable
partition $\bgamma$:
\begin{align}
&\mathbb{P} \left[ \I_{\G}(x_i : x_j) > 0 \right] && \notag \\
&=\mathbb{P} \left[ \I_{\G}(x_i : x_j) > 0\mid\set{\G:v_i=v_j} \right]
  \mathbb{P}\left[ \set{\G:v_i=v_j} \right] \notag &&\\
&\;\; +
    \mathbb{P}\left[ \I_{\G}(x_i:x_j) > 0\mid \set{\G:v_i\ne v_j} \right]
    \mathbb{P}\left[ \set{\G:v_i\ne v_j} \right] \notag &&\\
&= \mathbb{P}\left[ \I_{\G}(x_i : x_j) > 0\mid \set{\G:v_i=v_j} \right]
  \mathbb{P}\left[ \set{\G:v_i=v_j} \right] \notag &&\\
&<\mathbb{P} \left[ \set{\G:v_i=v_j} \right]
  \approx \frac{1}{H} \sum_{h=1}^H
    \mathbb{I}[\hat\G_h: \hat{v}_{[h,i]} = \hat{v}_{[h,j]}], \label{eq:depprob}
\end{align}
where Lemma~\ref{lemma:crosscat-cmi-sparse} has been used to set the addend in
line 3 to zero.
Also note that the summand in \eqref{eq:depprob} can be computed in $O(1)$ for
CrossCat sample $\hat\G_h$.
When dependencies among the $D$ variables are sparse such that many pairs
$(x_i,x_j)$ have MI upper bounded by 0, the number of invocations of
Algorithm~\ref{alg:crosscat-cmi} required to compute pairwise MI values is $\ll
O(D^2)$.
A comparison of upper bounding MI versus exact MI estimation with Monte Carlo is
shown in Figure~\ref{fig:depprob}.


\begin{figure}[ht]
\centering
\includegraphics[width=\linewidth]{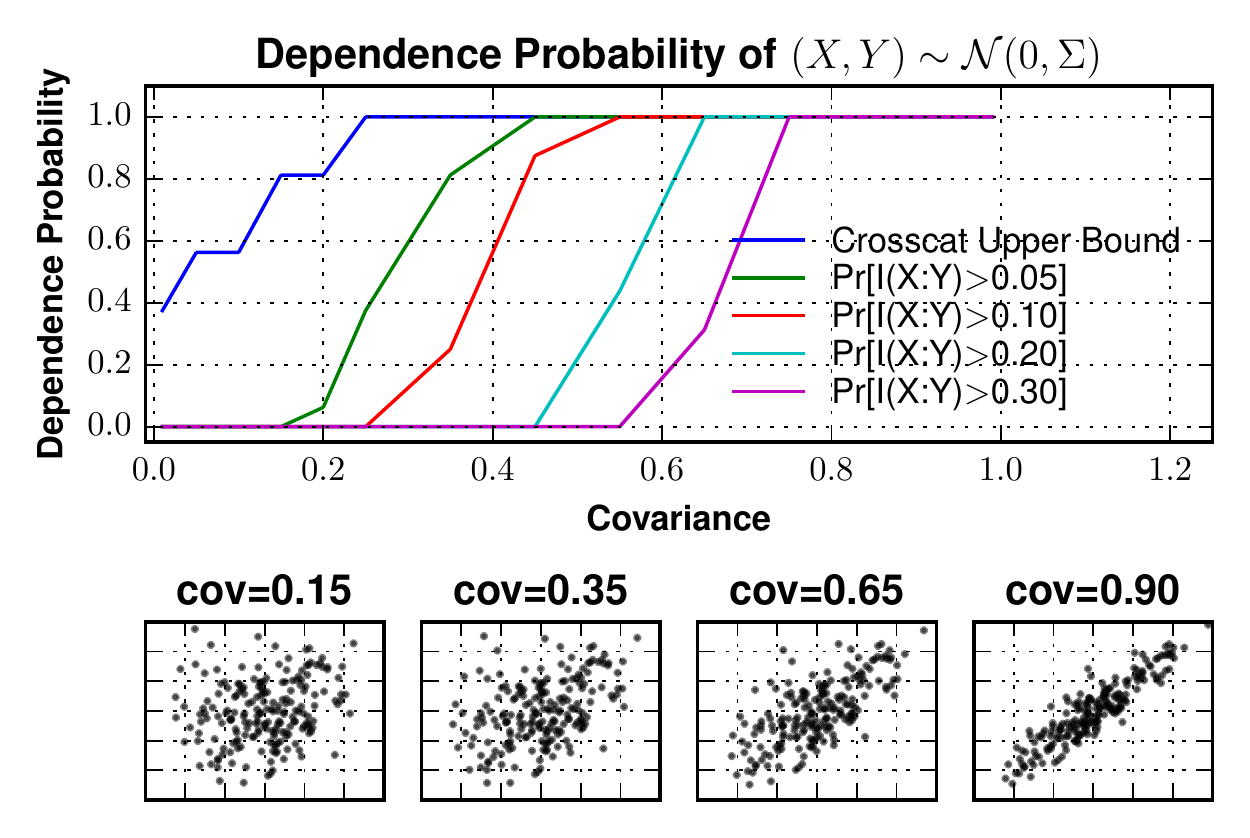}
\captionsetup{skip=0pt}
\caption{\footnotesize  Posterior probability that dimensions of a bivariate
Gaussian are dependent, vs the covariance (top). The CrossCat upper bound
\eqref{eq:depprob} is useful for detecting the existence of a predictive
relationship; the posterior distribution of MI can determine whether the
strength of the relationship is ``predictively significant'' based on various
tolerance levels (0.05, 0.10, 0.20, and 0.30 nats).}
\label{fig:depprob}
\hrule
\end{figure}


\section{Applications to macroeconomic indicators of global poverty,
education, and health}
\label{sec:applications}


\begin{figure*}[t]

\begin{subfigure}[b]{\linewidth}
    \begin{subfigure}[b]{.33\linewidth}
    \centering
    \includegraphics[width=\linewidth]{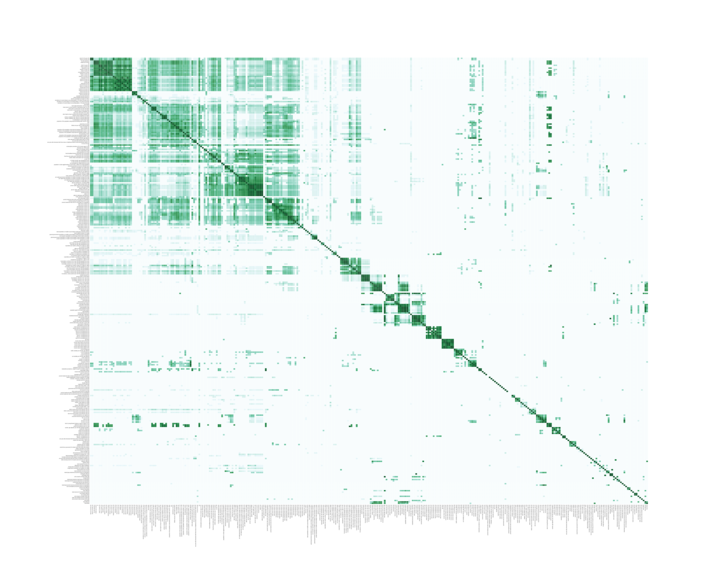}
    \captionsetup{skip=-5pt}
    \subcaption{$R^2$ Correlation Value}
    \label{subfig:heatmap-pearson-bonferroni}
    \end{subfigure}%
    \begin{subfigure}[b]{.33\linewidth}
    \includegraphics[width=\linewidth]{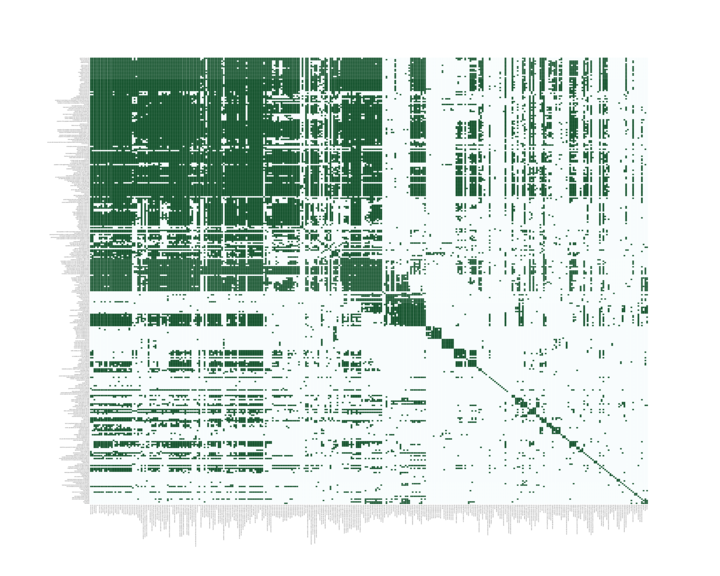}
    \captionsetup{skip=-5pt}
    \subcaption{HSIC \cite{gretton2005} Independence Test}
    \label{subfig:heatmap-hsic}
    \end{subfigure}%
    \begin{subfigure}[b]{.33\linewidth}
    \includegraphics[width=\linewidth]{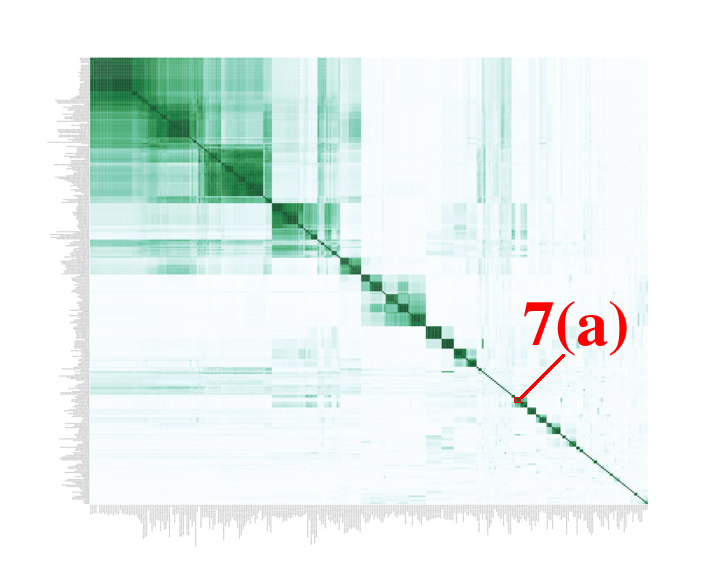}
    \captionsetup{skip=-5pt}
    \subcaption{$\mathbb{P}[\I_\G(x_i{:}x_j)>0]$, Eq~\eqref{eq:depprob}}
    \label{subfig:heatmap-dependence-probability}
    \end{subfigure}
\caption{Pairwise heatmaps of all 320 variables in the Gapminder dataset
using three dependency detection techniques. Darker cells indicate
a detected dependence between the two variables.}
\end{subfigure}%
\vspace{-0.1in}

\begin{subfigure}{\linewidth}
\begin{subtable}{.6\linewidth}
\tiny\bf
\begin{tabular*}{\linewidth}{lll@{\extracolsep{\fill}}l}
\toprule
variable A & variable B & $\mathbb{P}[\I_\G>0]$ & $R^2$ ($p\ll10^{-6}$)  \\ \midrule
personal computers & earthquake affected & 0.015625 &  0.974445 \\
road traffic total deaths & people living w/ hiv & 0.265625 & 0.858260 \\
natural gas reserves & 15-25 yrs sex ratio & 0.031250 & 0.951471 \\
forest products per ha & earthquake killed & 0.046875 & 0.936342 \\
flood affected & population 40-59 yrs & 0.140625 & 0.882729 \\ \bottomrule
\end{tabular*}
\end{subtable}%
\begin{subfigure}{.4\linewidth}
\includegraphics[width=\linewidth]{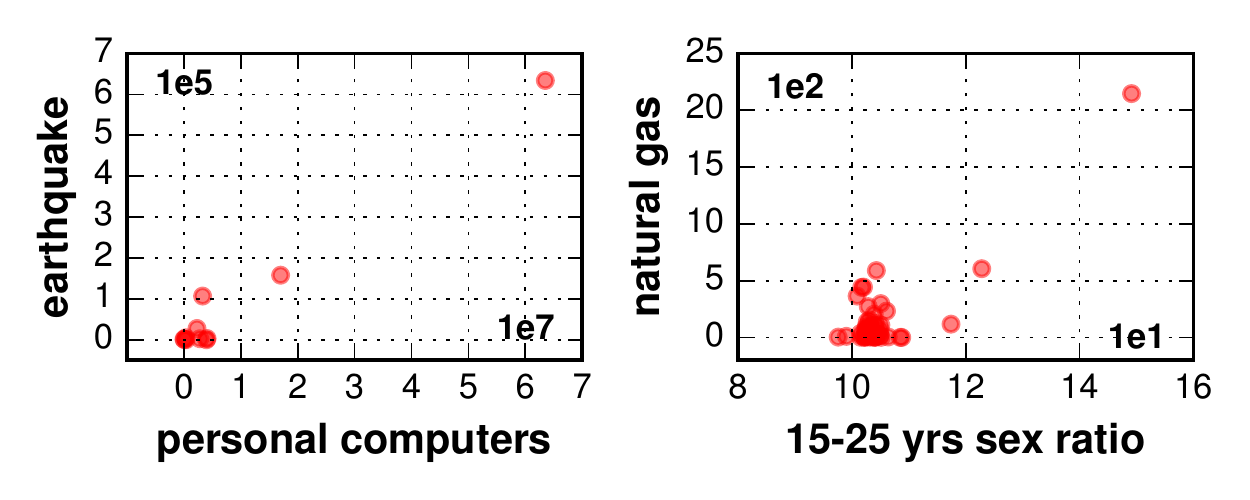}
\end{subfigure}
\captionsetup{skip=0pt}
\caption{Spurious relationships which are correlated under $R^2$, but
probably independent according to posterior MI.}
\label{tab:r2-cmi-spurious}
\end{subfigure}

\begin{subfigure}{\linewidth}
\begin{subtable}{.6\linewidth}
\tiny\bf
\begin{tabular*}{\linewidth}{lll@{\extracolsep{\fill}}l}
\toprule
variable A & variable B & $\mathbb{P}[\I_\G>0]$ & $R^2$ ($p\ll10^{-6}$) \\ \midrule
inflation & trade balance (\% gdp) & 0.859375 & 0.114470 \\
DOTS detection rate & DOTS coverage & 0.812500 & 0.218241 \\
forest area (sq. km) &  forest products (usd) &  0.828125 & 0.206145 \\
long term unemp. rate & total 15-24 unemp. & 0.968750 & NaN \\
male family workers & female self-employed & 0.921875 & NaN \\\bottomrule
\end{tabular*}
\end{subtable}%
\begin{subfigure}{.4\linewidth}
\includegraphics[width=\linewidth]{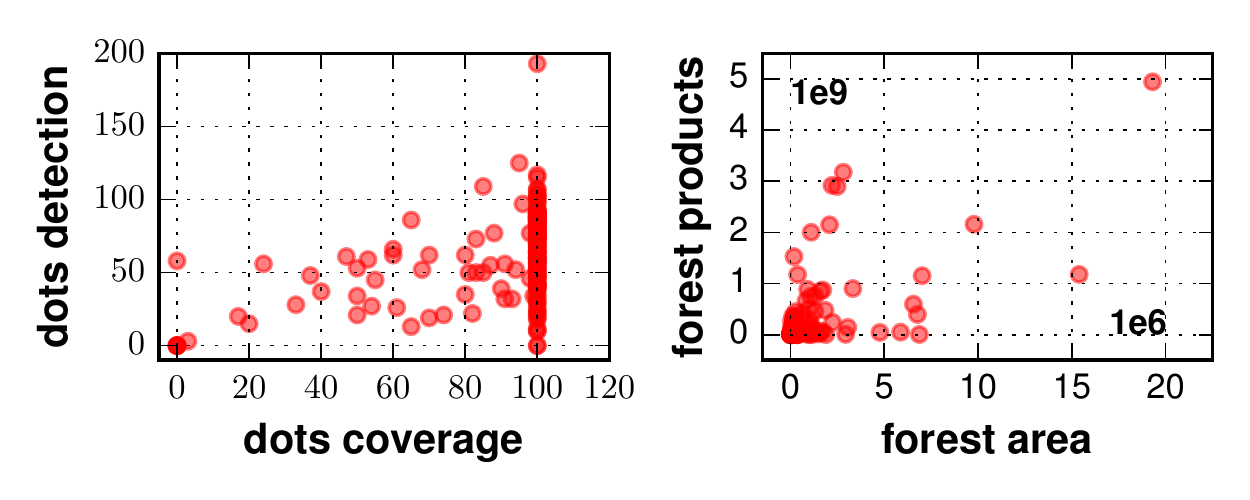}
\end{subfigure}
\captionsetup{skip=0pt}
\caption{Common-sense relationships probably dependent according to
posterior MI, but weakly correlated under $R^2$.}
\label{tab:r2-cmi-common-sense}
\end{subfigure}

\caption{Comparing dependences between variables in the Gapminder dataset,
as detected by $R^2$, HSIC (with Bonferroni correction for multiple testing),
and posterior distribution over mutual information in CrossCat.}
\label{fig:gapminder}
\end{figure*}

\begin{figure*}[t]
\centering

\begin{subfigure}[b]{.3\linewidth}
\includegraphics[width=\linewidth]{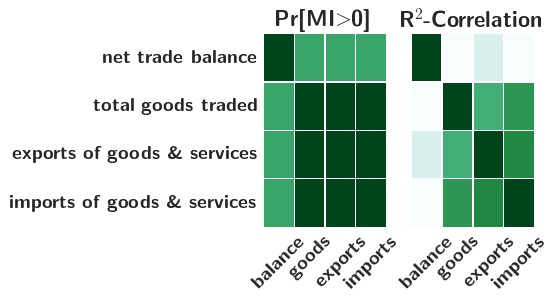}
\captionsetup{skip=2pt, width=.95\linewidth}
\subcaption{Block of ``trade'' variables detected as probably dependent.}
\label{fig:trade-heatmap}
\end{subfigure}%
\begin{subfigure}[b]{.4\linewidth}
\includegraphics[width=\linewidth]{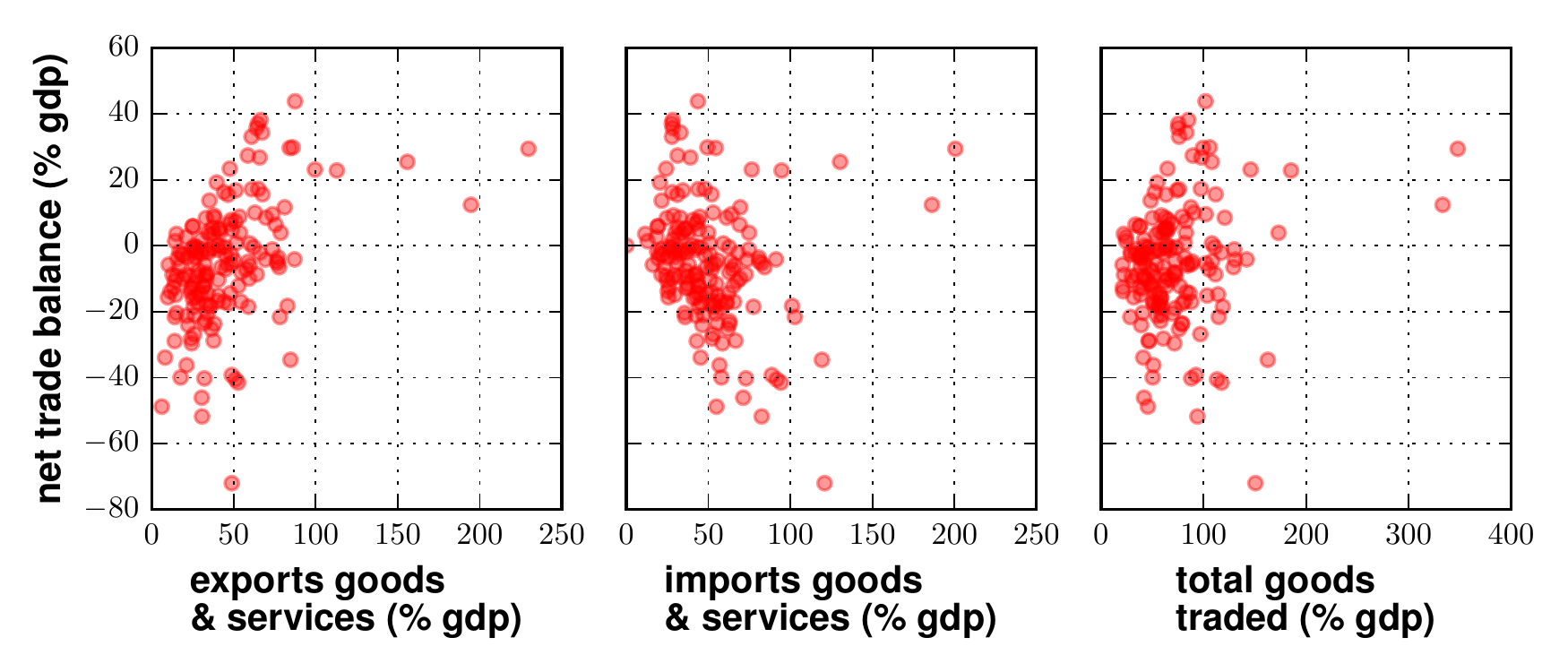}
\captionsetup{skip=2pt, width=.95\linewidth}
\subcaption{Scatter plots show weak linear correlations and
heteroskedastic noise for \texttt{net balance}.}
\label{fig:trade-scatter}
\end{subfigure}%
\begin{subfigure}[b]{.3\linewidth}
\includegraphics[width=\linewidth]{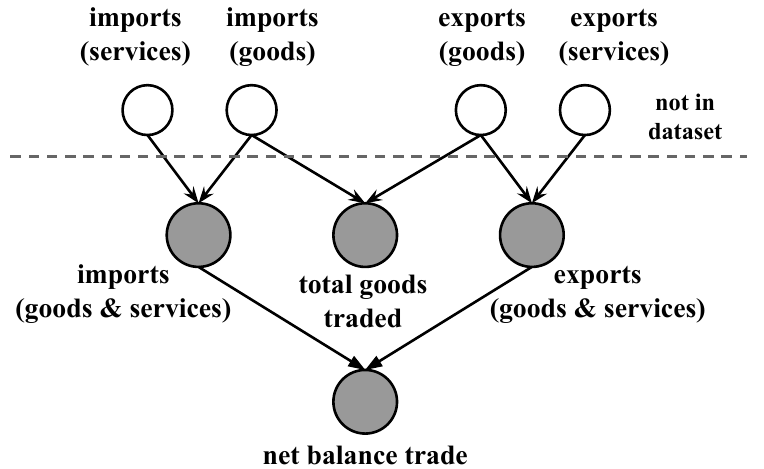}
\captionsetup{skip=2pt, width=.95\linewidth}
\subcaption{Ground truth causal structure of variables in the ``trade'' block.}
\label{fig:trade-graph}
\end{subfigure}

\begin{subfigure}[b]{\linewidth}
\centering
\includegraphics[width=\linewidth]{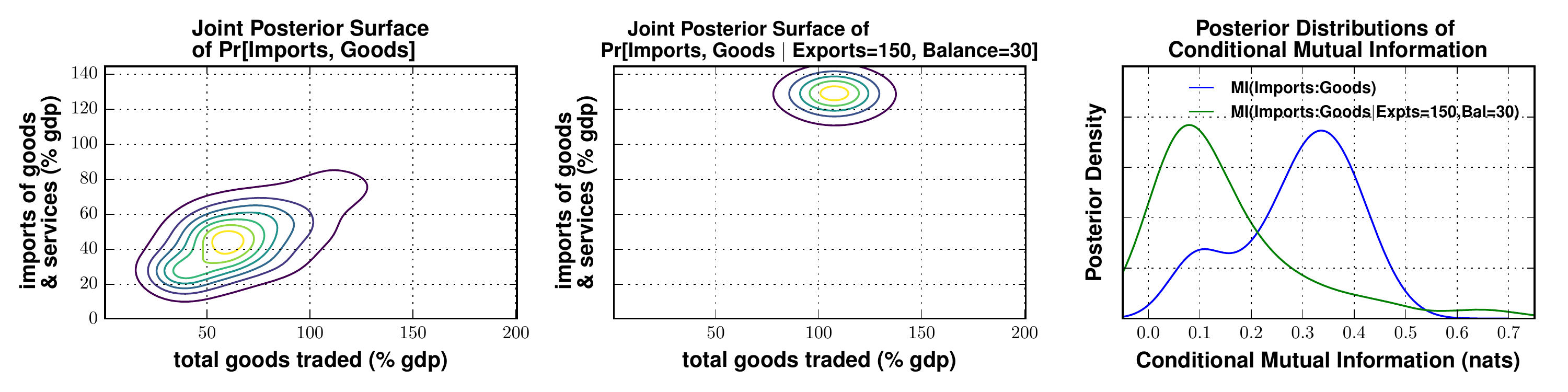}
\captionsetup{skip=0pt}
\subcaption{Left plot shows the joint posterior density of ``imports'' and
``goods'', where the marginal coupling is due to their common parent in
\subref{fig:trade-graph}. Center plot shows the same distribution conditioned on
``exports''{=}150 and ``balance''=30; ``imports'' now centers around its
noiseless value of 120, and is decoupled from ``goods''. Right plot shows the
CMI for these distributions.}
\label{fig:trade-joint-cmi}
\end{subfigure}

\caption{CMI discovers existence and confirms strength of predictive
relationships between ``trade'' variables.}
\label{fig:gapminder-trade}

\end{figure*}

This section illustrates the efficacy of the proposed approach on a sparse
database from an ongoing collaboration with the Bill \& Melinda Gates
Foundation.%
\footnote{A further application, to a \mbox{real-world} dataset of mathematics
exam scores, is shown in Appendix~\ref{appx:applications-marks}.}
The Gapminder data set is an extensive longitudinal dataset of $\sim$320 global
developmental indicators for 200 countries spanning over 5 centuries
\cite{rosling2008}.
These include variables from a broad set of categories such as education,
health, trade, poverty, population growth, and mortality rates.
We experiment with a cross-sectional slice of the data from 2002.
Figure~\ref{subfig:heatmap-pearson-bonferroni} shows the pairwise $R^2$
correlation values between all variables; each row and column in the heatmap is
an indicator in the dataset, and the color of a cell is the raw value of $R^2$
(between 0 and 1).
Figure~\ref{subfig:heatmap-hsic} shows pairwise binary hypothesis tests of
independence using HSIC \cite{gretton2005}, which detects a dense set of
dependencies including many spurious relationships %
(Appendix~\ref{appx:experimental-methods}).
For both methods, statistically insignificant relationships ($\alpha=0.05$ with
Bonferroni correction for multiple testing) are shown as 0.
Figure~\ref{subfig:heatmap-dependence-probability} shows an upper bound on the
pairwise probability that the MI of two variables exceeds zero (also a value
between 0 and 1).
These entries are estimated using Eq~\eqref{eq:depprob} (bypassing Monte Carlo
estimation) using $H{=}100$ samples of CrossCat.
Note that the metric $\mathbb{P}[\I_\G(x_i{:}x_j)>0]$ in
\mbox{Figure~\ref{subfig:heatmap-dependence-probability}} only indicates the
\textit{existence} of a predictive relationship between $x_i$ and $x_j$; it
does not quantify either the strength or directionality of the relationship.

It is instructive to compare the dependencies detected by $R^2$ and
\textsc{\small CrossCat-Cmi}.
Table \ref{tab:r2-cmi-spurious} shows pairs of variables that are spuriously
reported as dependent according to correlation; scatter plots reveal they are
either (i) are sparsely observed or (ii) exhibit high correlation due to large
outliers.
Table \ref{tab:r2-cmi-common-sense} shows common-sense relationships between
pairs of variables that \textsc{\small{CrossCat-CMI}} detects but $R^2$ does
not; scatter plots reveal they are either (i) non-linearly related, (ii)
roughly linear with heteroskedastic noise, or (iii) pairwise independent but
dependent given a third variable.
Recall that CrossCat is a product of DPMMs; practically meaningful
conditions for weak and strong consistency of Dirichlet location-scale mixtures
have been established by \cite{ghosal1999,tokdar2006}.
This supports the intuition that CrossCat can detect a broad class of predictive
relationships that simpler parametric models miss.

Figure~\ref{fig:gapminder-trade} focuses on a group of four ``trade''-related
variables in the Gapminder dataset detected as probably dependent: ``net trade
balance'', ``total goods traded'', ``exports of goods and services'', and
``imports of goods and services''.
$R^2$ fails to detect a statistically significant dependence
between ``net trade balance'' and the other variables, due to weak linear
correlations and heteroskedastic noise as
shown in the scatter plots (Figure~\ref{fig:trade-scatter}).
From economics, these four variables are causally related by the graphical model
in Figure~\ref{fig:trade-graph}, where the value of a node is a noisy addition
or subtraction of the values of its parents.
Figure~\ref{fig:trade-joint-cmi} illustrates that CrossCat recovers predictive
relationships between these variables: conditioning on ``exports''{=}150 and
``balance''{=}30 (a low probability event according to the left subplot) centers
the posterior predictive distribution of ``imports'' around 120, and decouples
it from ``total goods''.
The posterior CMI curves of ``imports'' and ``total goods'', with and without
the conditions on ``exports'' and ``balance'', formalize this decoupling (right
subplot of Figure~\ref{fig:trade-joint-cmi}).


\section{Related Work}
\label{sec:related}
\vspace{-0.1in}

There is broad acknowledgment that new techniques for dependency detection
beyond linear correlation are required.
Existing approaches for conditional independence testing include the use of
kernel methods
\cite{bach2002,fukumizu2007,zhang2011,sejdinovic2013}, copula functions
\cite{bouezmarni2012,poczos2012,lopez2013}, and characteristic functions
\cite{Su2007}, many of which capture non-linear and multivariate predictive
relationships.
Unlike these methods, however, our approach represents dependence in terms of
conditional mutual information and  is not embedded in a frequentist decision-
theoretic framework.
Our quantity of interest is a full posterior distribution over CMI, as opposed to
a $p$-value to identify when the null hypothesis CMI{=}0 cannot be rejected.
Dependence detection is much less studied in the Bayesian literature;
\cite{holmes2016} use a Polya tree prior to compute a Bayes Factor for the
relative evidence of dependence versus independence.
Their method is used only to quantify evidence for the existence, but not assess
the strength, of a predictive relationship.
The most similar approach to this work was proposed independently in recent work
by \cite{kunihama2016}, who compute a distribution over CMI by estimating the
joint density using an encompassing non-parametric Bayesian prior.
However, the differences are significant.
First, the Monte Carlo estimator in \cite{kunihama2016} is based on resampling
empirical data.
However, real-world databases may be too sparse for resampling data to yield
good estimates, especially for queries given unlikely constraints.
Instead, we use a Monte Carlo estimator by simulating the predictive
distribution.
Second, the prior in \cite{kunihama2016} is a standard Dirichlet process mixture
model, whereas this paper proposes a sparsity-inducing CrossCat prior, which
permits optimized computations for upper bounds of posterior probabilities as
well as simplifying CMI queries with multivariate conditions.


\section{Discussion}
\label{discussion}

This paper has shown it is possible to detect predictive relationships
by integrating probabilistic programming, information theory,
and non-parametric Bayes.
Users specify a broad class of conditional mutual information queries using a
simple SQL-like language, which are answered using a scalable pipeline based on
approximate Bayesian inference.
The underlying approach applies to arbitrary generative population models,
including parametric models and other classes of probabilistic programs
\cite{saad2016}; this work has focused on exploiting the sparsity of CrossCat
model structures to improve scalability for exploratory analysis.
With this foundation, one may extend the technique to domains such as
causal structure learning. The CMI estimator can be used as a
\mbox{conditional-independence} test in a structure discovery algorithm such as
PC \cite{sprites2000}.
It is also possible to use learned CMI probabilities as part of a prior over
directed acyclic graphs in the Bayesian setting.
This paper has focused on detection and preliminary assessment of predictive
relationships; confirmatory analysis and descriptive summarization are left for
future work, and will require an assessment of the robustness of joint density
estimation when random sampling assumptions are violated.
Moreover, new algorithmic insights will be needed to scale the technique to
efficiently detect pairwise dependencies in very high-dimensional databases with
tens of thousands of variables.

\clearpage


\subsubsection*{Acknowledgements}

This research was supported by DARPA (PPAML program, contract number
FA8750-14-2-0004), IARPA (under research contract 2015-15061000003), the Office
of Naval Research (under research contract N000141310333), the Army Research
Office (under agreement number W911NF-13-1-0212), and gifts from Analog Devices
and Google.

\bibliography{writeup}


\clearpage

\appendix

\section{Proof of optimizing a CMI query}
\label{appx:proof-optimizing-cmi}

\begin{proof}[Proof of Lemma~\ref{lemma:crosscat-cmi-sparse}]
\renewcommand{\qedsymbol}{}
We use the product-sum property of the logarithm (line 3) and linearity of
expectation (line 4) to show that CrossCat's variable partition $\bgamma$
induces a factorization of a CMI query.
\begin{flalign*}
&\I_{\G}\left(\x_\mtA{:}\x_\mtB|\hx_\mtC\right)
= \mathbb{E}\left[
    \log \left(
      \frac{\pt(\x_\mtA{:}\x_\mtB|\hx_\mtC)}
        {\pt(\x_\mtA|\hx_\mtC)\pt(\x_\mtB|\hx_\mtC)}
    \right)
  \right]\\
&= \mathbb{E}\left[
    \log \left(
      \prod_{\mtV\in\bgamma}
        \frac{
          \ptv(\x_{\mtA\cap\mtV}, \x_{\mtB\cap\mtV} | \hx_{\mtC\cap\mtV})}
          {\ptv(\x_{\mtA\cap\mtV}|\hx_{\mtC\cap\mtV})
            \ptv(\x_{\mtB\cap\mtV}|\hx_{\mtC\cap\mtV})}
    \right)
  \right]\\
&= \mathbb{E}\left[
    \sum_{\mtV\in\bgamma}
       \log \left(
        \frac{
          \ptv(\x_{\mtA\cap\mtV}, \x_{\mtB\cap\mtV} | \hx_{\mtC\cap\mtV})}
          {\ptv(\x_{\mtA\cap\mtV}|\hx_{\mtC\cap\mtV})
            \ptv(\x_{\mtB\cap\mtV}|\hx_{\mtC\cap\mtV})}
    \right)
  \right]\\
&= \sum_{\mtV\in\bgamma}
    \mathbb{E}\left[
      \log \left(
        \frac{
          \ptv(\x_{\mtA\cap\mtV}, \x_{\mtB\cap\mtV} | \hx_{\mtC\cap\mtV})}
          {\ptv(\x_{\mtA\cap\mtV}|\hx_{\mtC\cap\mtV})
            \ptv(\x_{\mtB\cap\mtV}|\hx_{\mtC\cap\mtV})}
      \right)
    \right]\\
&= \sum_{\mtV\in\bgamma}
  \I_{\G_\mtV} \left(
    \x_{\mtA\cap\mtV}{:}\x_{\mtB\cap\mtV}|
    \hx_{\mtC\cap\mtV}
  \right).
\end{flalign*}
\end{proof}

\vspace{-3cm}

\section{Experimental methods for dependence detection baselines}
\label{appx:experimental-methods}

In this section we outline the methodology used to produce the pairwise $R^2$
and HSIC heatmaps shown in Figures~\ref{subfig:heatmap-pearson-bonferroni} and
\ref{subfig:heatmap-hsic}.
To detect the strength of linear correlation (for $R^2$) and perform a marginal
independence test (for HSIC) given variables $x_i$ and $x_j$ in the Gapminder
dataset, all records in which at least one of these two variables is missing
were dropped.
If the total number of remaining observations was less than three, the null
hypothesis of independence was not rejected due to degeneracy of these methods
at very small sample sizes.
Hypothesis tests were performed at the $\alpha=0.05$ significance level.
To account for multiple testing (a total of $\binom{320}{2}=51040$), a standard
Bonferroni correction was applied to ensure a family-wise error rate of at most
$\alpha$.

We used an open source MATLAB implementation for HSIC (function
\url{hsicTestBoot} from
\url{http://gatsby.ucl.ac.uk/~gretton/indepTestFiles/indep.htm}).
1000 permutations were used to approximate the null distribution, and kernel
sizes were determined using median distances from the dataset.
From Figure~\ref{subfig:heatmap-hsic}, HSIC detects a large number of
statistically significant dependencies.
Figures~\ref{fig:hsic-spurious} and \ref{fig:hsic-common-sense} report spurious
relationships reported as dependent by HSIC but have a low dependence
probability of less than 0.15 according to posterior CMI (Eq~\ref{eq:depprob}),
and common-sense relationships reported as independent HSIC but have a
high dependence probability.

\begin{figure}[H]
\centering
\tiny\bf
\caption{\footnotesize Spurious relationships detected as dependent by HSIC
($p\ll 10^{-6}$) but probably independent
($\mathbb{P}[\I_\G(x_i{:}x_j)>0]<0.15)$ by the MI upper bound.}
\label{fig:hsic-spurious}
\begin{tabular*}{\linewidth}{ll}
variable A & variable B  \\ \midrule
body mass index (men) & privately owned forest (\%) \\
energy use (per capita) & 50+ yrs sex ratio \\
homicide (15-29) & inflation  (annual \%) \\
children out of primary school & male above 60 (\%) \\
income share (fourth 20\%) & suicide rate age 45-59 \\
personal computers & arms exports (US\$) \\
billionaires (per 1M) & mobile subscription (per 100) \\
residential elec. consumption & smear-positive detection (\%) \\
coal consumption (per capita) & age at 1st marriage (women) \\
coal consumption (per capita) & dead kids per woman \\
underweight children & murder (per 100K) \\
female 0-4 years (\%) &  15+ literacy rate (\%) \\
broadband subscribers (\%) & dots new case detection (\%) \\
crude oil prod. (per capita) & TB incidence (per 100K) \\
dependency ratio & people living with hiv  \\ \bottomrule
\end{tabular*}

\includegraphics[width=.9\linewidth]{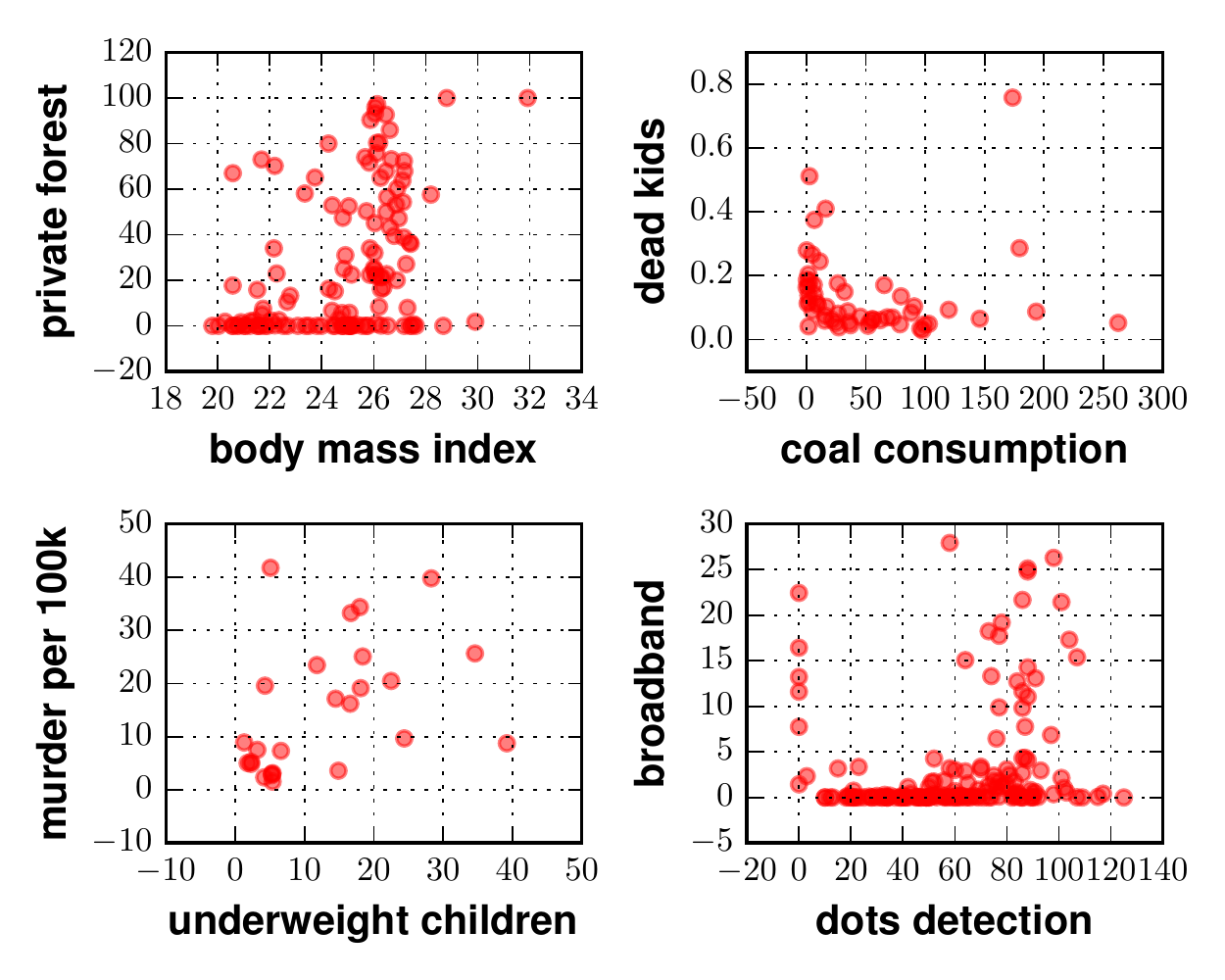}

\rule{\linewidth}{1px}
\medskip

\caption{\footnotesize Common-sense relationships detected as independent by
HSIC ($p\ll 10^{-6}$), but probably dependent
($\mathbb{P}[\I_\G(x_i{:}x_j)>0]>0.85)$ by the MI upper bound.}
\label{fig:hsic-common-sense}

\begin{tabular*}{\linewidth}{ll}
variable A & variable B  \\ \midrule
motor vehicles per 1k & pop urban agglomerations (\%) \\
car mortality (per 100K) & road incidents 45-59 \\
wood removal (cubic m.) & primary forest land (ha)\\
forest products total (US\$) & primary forest land (ha) \\
15-24 yrs sex ratio & 50+ yrs sex ratio \\
gdp per working hr (US\$) & urban population (\%) \\
dots pop. coverage (\%) & dots new case detection (\%) \\
total 15-24 unemp. (\%) & long term unemp. (\%) \\
female self-employed (\%) & female service workers (\%)  \\
female agricult. workers (\%) & total industry workers (\%) \\
female industry workers (\%) & male industry workers (\%) \\
road incidents age 60+ & road incidents age 15-29 \\
suicide rate age 15-29 & suicide rate age 60+ \\ \bottomrule
\end{tabular*}

\includegraphics[width=.9\linewidth]{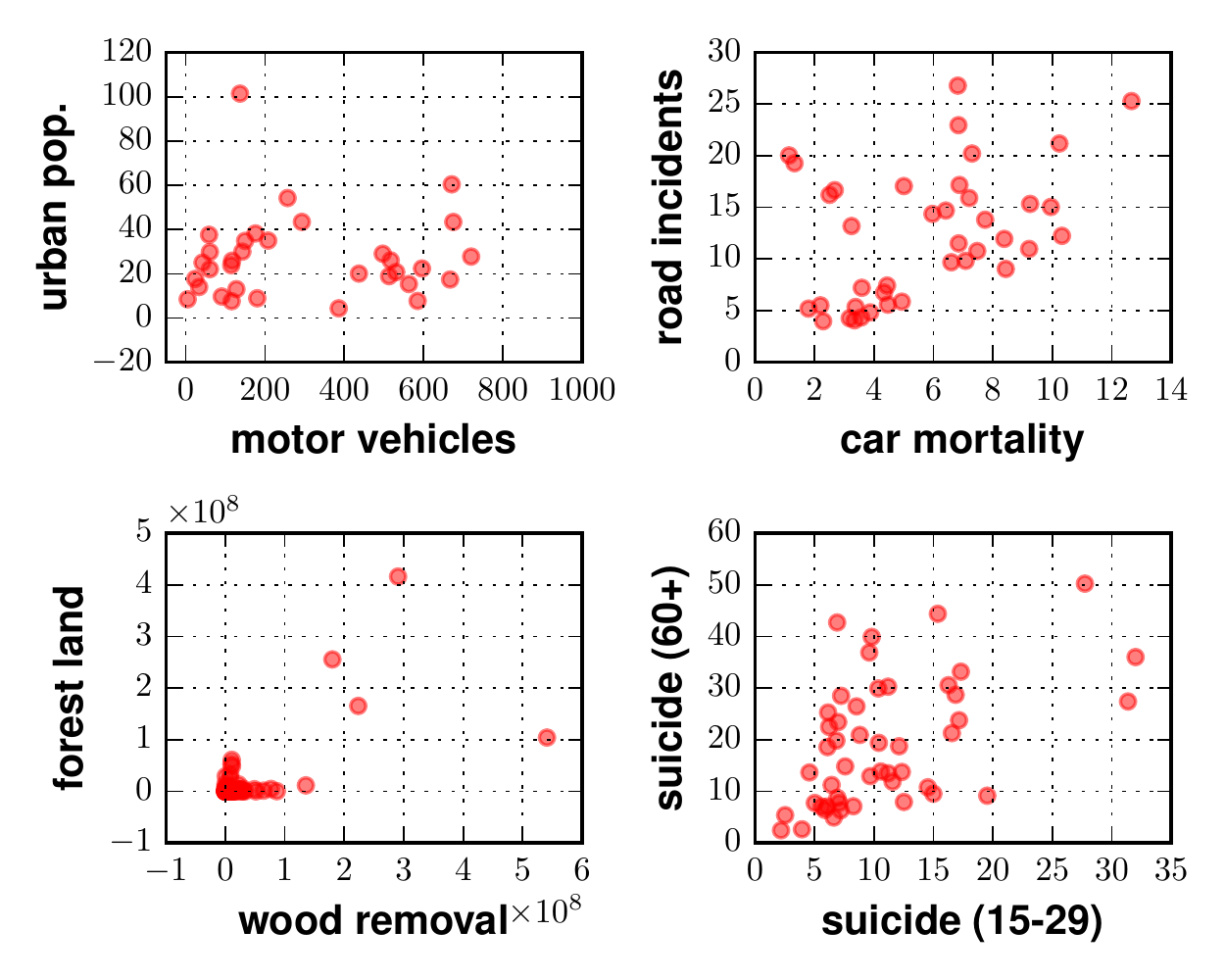}

\end{figure}


\begin{figure*}[t!]

\section{Application to a database of mathematics marks}
\label{appx:applications-marks}

\begin{subtable}[b]{.34\linewidth}
\tiny\bf
\setlength{\tabcolsep}{.6em}
\begin{tabular*}{\linewidth}{|llll@{\extracolsep{\fill}}l|}
\hline
\textbf{mech}
    & \textbf{vectors}
    & \textbf{algebra}
    & \textbf{analysis}
    & \textbf{stats} \\
\hline
77 & 82 & 67 & 67 & 81 \\
23 & 38 & 36 & 48 & 15 \\
63 & 78 & 80 & 70 & 81 \\
55 & 72 & 63 & 70 & 68 \\
\dots & \dots & \dots & \dots & \dots \\
\hline
\end{tabular*}
\captionsetup{width=.9\linewidth}
\subcaption{Database of mathematics marks for 88 students, where rows are
students and columns are exam scores.}
\label{subfig:marks-dataset}
\end{subtable}\hspace{.1cm}%
\begin{subtable}[b]{.34\linewidth}
\tiny\bf
\begin{tabular*}{\linewidth}{|l|@{\extracolsep{\fill}}lllll|}
\hline
\textbf{}
    & \textbf{M}
    & \textbf{V}
    & \textbf{G}
    & \textbf{L}
    & \textbf{S} \\
\hline
\textbf{M} & 1.00  & 0.33  & 0.23 & \red{0.00} & \red{0.03} \\
\textbf{V} & 0.33  & 1.00  & 0.28 & \red{0.08}  & \red{0.02} \\
\textbf{G} & 0.23  & 0.28  & 1.00 & 0.43  & 0.36 \\
\textbf{L} & \red{0.00} & \red{0.08}  & 0.43 & 1.00  & 0.26 \\
\textbf{S} & \red{0.02}  & \red{0.02}  & 0.36 & 0.26  & 1.00 \\ \hline
\end{tabular*}
\captionsetup{width=.95\linewidth}
\subcaption{Partial correlation matrix; red entries indicate statistically
significant conditional independences.}
\label{subfig:marks-pcorr}
\end{subtable}\hspace{.2cm}%
\begin{subfigure}[b]{.3\linewidth}
\begin{tikzpicture}
\coordinate (vectors-cord);
\coordinate[above = .55cm of vectors-cord] (mechanics-cord);
\coordinate[above right = .275cm and 1.5cm of vectors-cord] (algebra-cord);
\coordinate[right = 3cm of vectors-cord] (statistics-cord);
\coordinate[above right = .55cm and 3cm of vectors-cord] (analysis-cord);

\node[circle, draw] (vectors-node) at (vectors-cord) {};
\node[circle, draw] (mechanics-node) at (mechanics-cord) {};
\node[circle, draw] (algebra-node) at (algebra-cord) {};
\node[circle, draw] (statistics-node) at (statistics-cord) {};
\node[circle, draw] (analysis-node) at (analysis-cord) {};

\node[
    below = .1cm of vectors-cord,
    align = center,
] (vectors-text) {
    \tiny\bf vectors (V)
};

\node[
    above = .1cm of mechanics-cord,
    align = center,
] (mechanics-text) {
    \tiny\bf mechanics (M)
};

\node[
    below = .1cm of algebra-cord,
    align = center,
] (algebra-text) {
    \tiny\bf algebra (G)
};

\node[
    below = .1cm of statistics-cord,
    align = center,
] (statistics-text) {
    \tiny\bf statistics (S)
};

\node[
    above = .1cm of analysis-cord,
    align = center,
] (analysis-text) {
    \tiny\bf analysis (L)
};

\draw[-] (vectors-node) -- (mechanics-node);
\draw[-] (algebra-node) -- (vectors-node);
\draw[-] (algebra-node) -- (mechanics-node);
\draw[-] (algebra-node) -- (analysis-node);
\draw[-] (algebra-node) -- (statistics-node);
\draw[-] (statistics-node) -- (analysis-node);

\end{tikzpicture}
\subcaption{Undirected (Gaussian) graphical model implied by the partial
correlation matrix.}
\label{subfig:marks-graphical}
\end{subfigure}

\begin{subfigure}[b]{.45\linewidth}
\includegraphics[width=\linewidth]{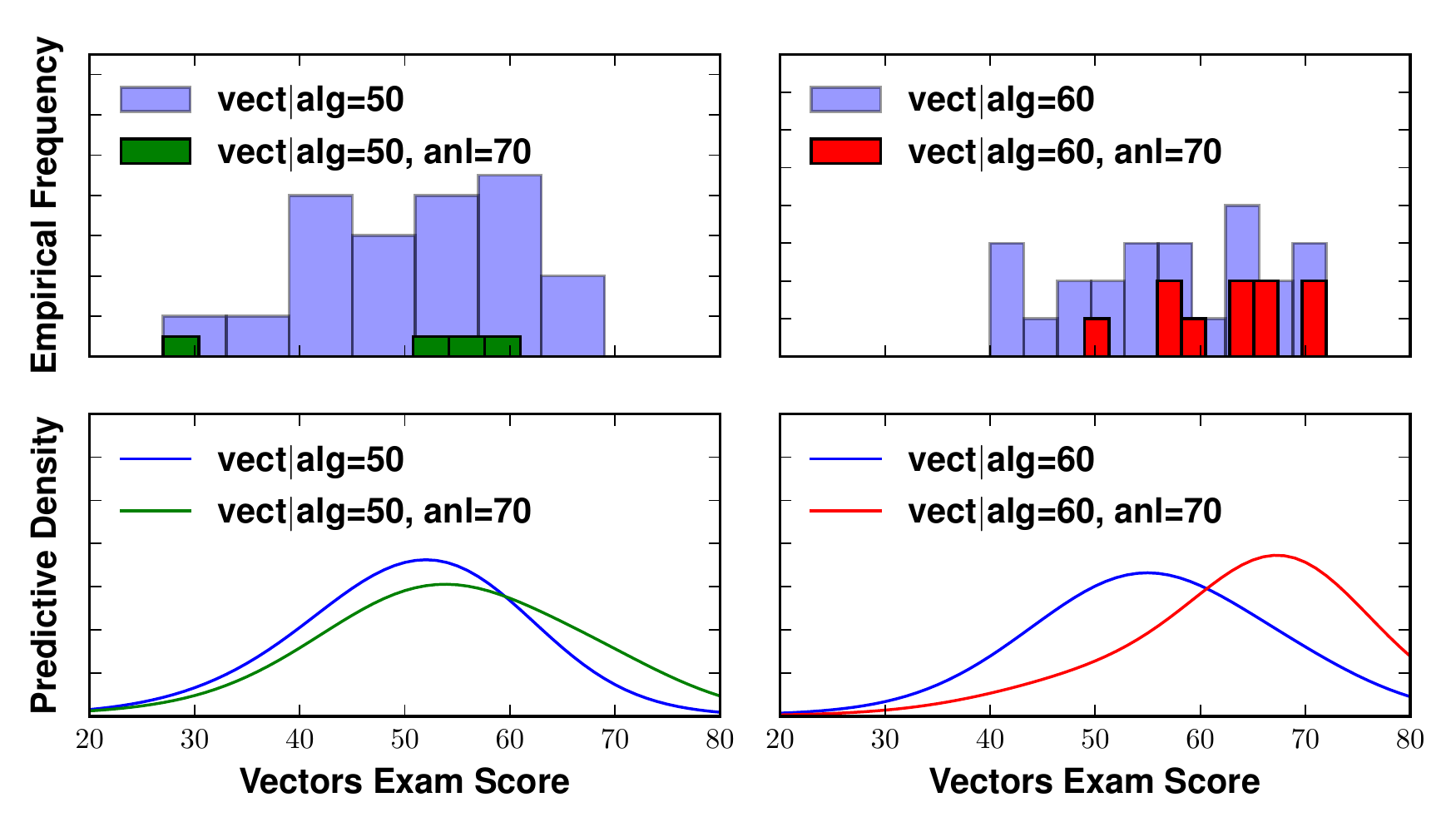}
\captionsetup{width=.95\linewidth}
\subcaption{Histograms from the raw dataset (top); and predictive
distributions from CrossCat (bottom).}
\label{subfig:pdf-vectors-algebra}
\end{subfigure}%
\begin{subfigure}[b]{.55\linewidth}
\includegraphics[width=\linewidth]{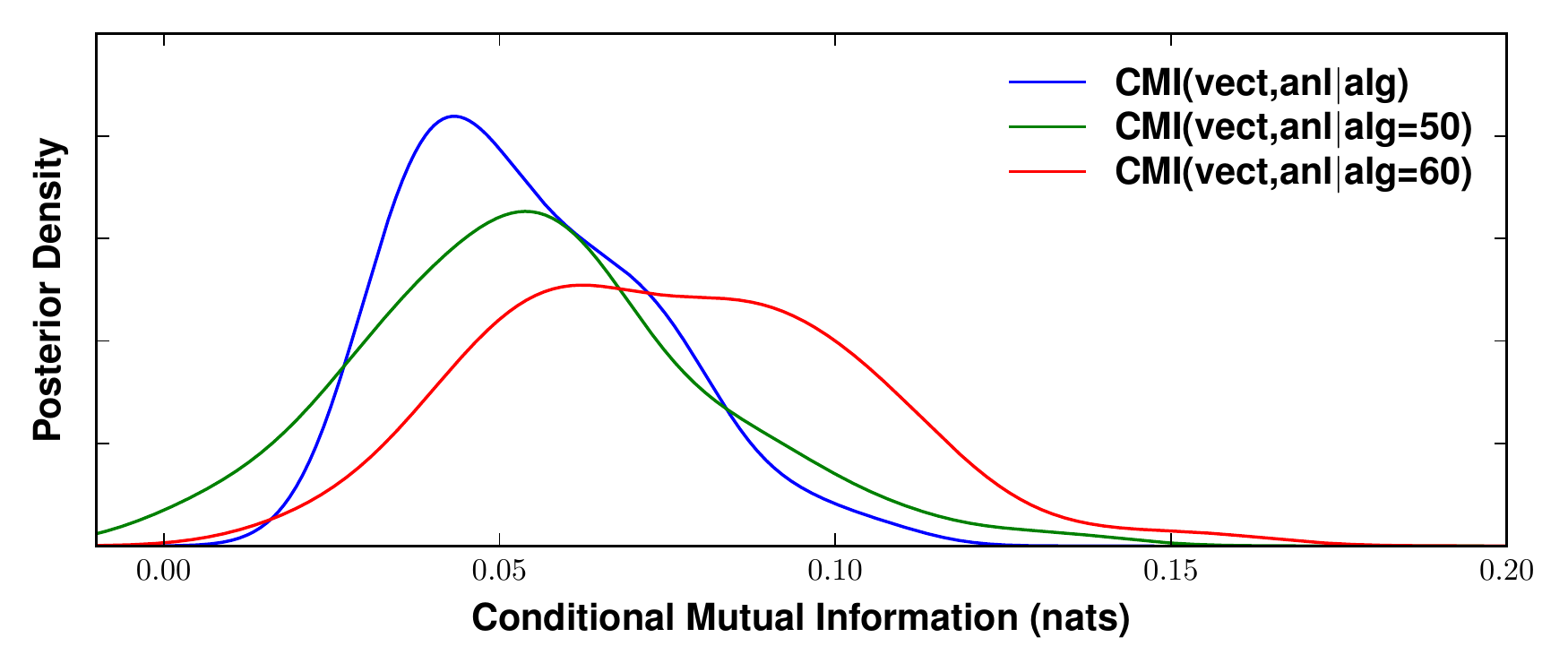}
\captionsetup{width=.95\linewidth}
\subcaption{Posterior distribution of CMI(\texttt{vectors},  \texttt{analysis})
given various conditions of \texttt{algebra} show context-specific
dependence.}
\label{subfig:cmi-vectors-analysis-algebra}
\end{subfigure}

\caption{Using posterior CMI distributions to discover \textit{context-specific}
predictive relationships in the mathematics marks dataset
\cite{mardia1980,whittaker1990,edwards2012} which are missed by partial
correlations.
\textbf{\subref{subfig:marks-dataset}}
The database contains scores of 88 students on five mathematics exams:
\texttt{mechanics}, \texttt{vectors}, \texttt{algebra},
\texttt{analysis}, and \texttt{statistics}.
\textbf{\subref{subfig:marks-pcorr}} Modeling the variables as jointly Gaussian
and computing the partial correlation matrix indicates that (\texttt{mechanics},
\texttt{vectors}) are together conditionally independent of (\texttt{analysis},
\texttt{statistics}), given \texttt{algebra}.
\textbf{\subref{subfig:marks-graphical}} A Gaussian graphical model which
expresses the conditional independences relationships is formed by removing
edges whose incident nodes have statistically-significant partial correlations
of zero. The graph suggests that when predicting the \texttt{vectors} score for
a student whose \texttt{algebra} score is known, further conditioning on the
\texttt{analysis} score provides no additional information. We will critique
this finding, by showing that the predictive strength of \texttt{analysis} on
\texttt{vectors} given \texttt{algebra} varies, depending on the conditioning
value of \texttt{algebra}.
\textbf{\subref{subfig:pdf-vectors-algebra}} The left panel shows that when
\texttt{algebra} = 50, conditioning on \texttt{analysis} = 70 appears to have
little effect on the prediction for \texttt{vectors}. The right panel shows that
when \texttt{algebra} = 60, however, conditioning on \texttt{analysis} = 70
results in a sizeable shift of the posterior mean of \texttt{vectors} from 52 to
just under 70. This shift is consistent with the top right histogram, where
knowing that \texttt{analysis} = 70 eliminates all the \texttt{vectors} scores
in the heavy left tail.
\textbf{\subref{subfig:cmi-vectors-analysis-algebra}}
We formalize this ``context-specific'' dependence by computing the distribution
of the CMI of \texttt{vectors} and \texttt{analysis} under two conditions:
\texttt{algebra} = 50 (green curve), and \texttt{algebra} = 60 (red curve).
The red curve places great probability on higher values of mutual information
than the green curve, which explains the shift in predictive density from
\subref{subfig:pdf-vectors-algebra}. Finally, we observe that the CMI is
weakest when \textit{marginalizing} over all values of \texttt{algebra} (blue
curve), which explains why the partial correlation of \texttt{vectors} and
\texttt{analysis}, which only considers marginal relationships, is near zero.}
\label{fig:marks}
\end{figure*}

\end{document}